\newtheoremstyle{thm}{0.32em}{0.32ex}{\it}{}{\bf}{.}{0.5em}{\thmname{#1}\thmnumber{ #2}\thmnote{ ({\rm #3})}}
\theoremstyle{thm}
\newtheorem{theorem}{Theorem}[section]
\newtheorem{lemma}[theorem]{Lemma}
\newtheorem{example}[theorem]{Example}
\newtheorem{corollary}[theorem]{Corollary}
\newtheorem{proposition}[theorem]{Proposition}
\newcounter{chap}
\newcounter{sect}
\titlespacing{\section}{0cm}{0.6cm}{0.2cm}
\def\enumerate{%
 \ifnum \@enumdepth >\thr@@\@toodeep\else
   \advance\@enumdepth\@ne
   \edef\@enumctr{enum\romannumeral\the\@enumdepth}%
     \expandafter
     \list
       \csname label\@enumctr\endcsname
       {\usecounter\@enumctr\def\makelabel##1{\hss\llap{##1}}%
         %\addtolength{\leftmargin}{5pt}
         \addtolength{\parsep}{1.56pt}
         %\addtolength{\rightmargin}{20pt} %%%%
         \addtolength{\listparindent}{0pt} %%%%
         \addtolength{\itemsep}{-8pt} %%%%
         \addtolength{\topsep}{-1pt}} %%%%
 \fi}
\def\NN{\mathbb{N}}
\def\RR{\mathbb{R}}
\newcommand{\supp}{\operatorname{supp}}
\renewenvironment{proof}{\noindent{\em
Proof.}\hspace{0.3em}}{\hfill\qed\vspace{0ex}}
\title{\bf \large Approximation capabilities of neural networks on unbounded domains}
\author[a]{Ming-Xi Wang \thanks{Corresponding author}}
\author[b,c]{Yang Qu}
\affil[a]{PAG Investment Solutions, Geneve, Switzerland. mingxi.waeng@gmail.com}
\affil[b]{School of Mathematics, Hunan University, China, quyang@hnu.edu.cn}
\affil[c]{Xiaoxiang Research Institute of Big Data}
\begin{document}
\maketitle

\begin{abstract}
In this paper, we prove that a shallow neural network with a monotone sigmoid, ReLU, ELU, Softplus, or LeakyReLU activation function can arbitrarily well approximate any $L^p(p\ge2)$ integrable functions defined on $\RR\times[0,1]^n$. We also prove that a shallow neural network with a sigmoid, ReLU, ELU, Softplus, or LeakyReLU activation function expresses no nonzero integrable function defined on the Euclidean plane. Together with a recent result that the deep ReLU network can arbitrarily well approximate any integrable function on Euclidean spaces, we provide a new perspective on the advantage of multiple hidden layers in the context of ReLU networks. Lastly, we prove that ReLU network with depth 3 is a universal approximator in $L^p(\RR^n)$.
\end{abstract}

\begin{keyword}
Universal approximation theorem, unbounded domain, neural networks, sigmoid, relu, elu, softplus, leaky relu, tail risk, benefit of depth
\end{keyword}

%%%%%%%%%%%%%%%
%\iffalse

%\fi
%%%%%%%%%%%%%%%%%%%%%%%%%%

\bigskip
\noindent

\section{Introduction}\label{intro}
The universal approximation theorem of Cybenko \cite{Cybenko} states that single hidden layer neural networks with a sigmoidal activation function can arbitrarily well approximate any continuous function with support in the unit hypercube. This famous theorem justifies the representational power of feedforward neural networks. We refer to \cite{Funahashi}, \cite{Hornik1989}, \cite{Barron}, \cite{polynomial}, \cite{Pinkus} and references there for numerous variants and generalizations of this result.

Shallow neural networks are neural networks that have only one hidden layer.  In this paper, we start with investigating the representational power of shallow neural networks in $L^p(\RR^m\times[0,1]^n)(1\mathopen\leq\mathclose p \mathopen<\mathclose\infty)$. In the case $m\mathopen=\mathclose0$, it is well-known that shallow neural networks with a nonpolynomial activation function are universal approximators in $L^p([0,1]^n)$ \cite{Pinkus}. In the case $m\mathopen=\mathclose1$, $p\mathopen\ge\mathclose2$, we prove that shallow  monotone sigmoid, ReLU, ELU, softplus, or LeakyReLU networks  are universal approximators in $L^p(\RR\times[0,1]^n)$ (Theorem \ref{theorembounded11}, Theorem \ref{theorembounded}). This is a partial generalization of the previous classical result. In the case $m\mathopen>\mathclose1$, we show that shallow sigmoid, ReLU, ELU, softplus, or LeakyReLU networks express no nonzero function in $L^p(\RR^m\times[0,1]^n)$(Corollary \ref{corollarydepth}). This follows as a corollary of a stronger result that these neural networks express no nonzero function in $L^p(\RR\times\RR^+)$(Proposition \ref{theorembounded111}, Proposition \ref{theorembounded2}). Lastly, in the context of deep networks, we prove that deep ReLU networks with depth 3 is a universal approximator in $L^p(\RR^n)$.

In the literature, there have been some results concerning neural networks' representational power on unbounded domains: Under mild conditions, radial-basis-function networks are universal approximators in $L^p(\RR^n)(p \in [1, \infty))$ \cite{Park}; Shallow sigmoid networks are universal approximators in $C(\overline{\RR^n})$, where $\overline{\RR^n}$ is the one-point compactification of $\RR^n$ \cite{Ito}, \cite{Chen1991}, \cite{Huang}; Shallow neural networks with a non-constant bounded activation are universal approximators in $L^p(\RR^n, \mu)$, where $\mu$ is a finite input space environment measure \cite{Hornik1991}; Deep ReLU networks with bounded width can arbitrarily well approximate any function in $L^1(\RR^n)$ \cite{width}; Deep ReLU networks with at most $[\log_2(n+1)]$ hidden layers can arbitrarily well approximate any function in $L^p(\RR^n)(p \in [1, \infty))$ \cite{deeprelu}. Our work is an addition to the existing literature. For example, concerning expressivity of ReLU networks, we have Table \ref{tableReLU}.
\begin{small}
\begin{table}[h]
\caption{Expressivity of ReLU networks ($n\ge2$, depth: $d_h$, width: $d_w$)}
\label{tableReLU}
\begin{center}
\begin{tabular}{lllll}
\multicolumn{1}{c}{Network architecture}  &\multicolumn{1}{c}{Function space} &\multicolumn{1}{c}{Approximator?} &\multicolumn{1}{c}{Reference}\\ 
\midrule
Arbitrary width case:\\
$d_h=2$  &   $C([0,1]^n), L^p([0,1]^n)$  & $\surd$   & \cite{5726775} \\
$d_h=2$  &   $C(\overline{\RR^n})$  & $\surd$   & \cite{Ito} \\
$d_h \mathopen\leq\mathclose \lceil\log_2(n\mathopen+\mathclose1)\rceil\mathopen+\mathclose1$  &   $L^p(\mathbb{R}^n)$  & $\surd$   & \cite{deeprelu} \\
$d_h=2$  &   $L^p(\mathbb{R}\times [0,1]^{n-1})(p\mathopen\ge\mathclose2)$  & $\surd$   & This paper \\
$d_h=2$  &   $L^p(\mathbb{R}^n), L^p(\mathbb{R}^2\times [0,1]^{n-2})$  & $\times$   & This paper \\
$d_h=3 $  &   $L^p(\mathbb{R}^n)$  & $\surd$   & This paper \\
Arbitrary depth case:\\
$d_w\leq n+4$  &   $L^1(\mathbb{R}^n)$  & $\surd$   & \cite{width} \\
$d_w\leq n+1$  &   $C([0,1]^n)$  & $\surd$   & \cite{Hanin2017ApproximatingCF} \\
$d_w\leq n+2$  &   $L^p(\mathbb{R}^n)$  & $\surd$   & \cite{pmlr-v125-kidger20a}\\
$d_w\leq n$  &   $L^1(\mathbb{R}^n)$  & $\times$   & \cite{width} \\
$d_w\leq n$  &   $C([0,1]^n)$  & $\times$   & \cite{Hanin2017ApproximatingCF} \\
\end{tabular}
\end{center}
\end{table}
\end{small}

%Our contributions are:
%\begin{itemize}
%\item Shallow neural networks with a monotone sigmoid, ReLU, ELU, softplus, or LeakyReLU activation function are universal approximators in  $L^p(\RR\times[0,1]^n)(p\ge2)$.  
%\item Shallow neural networks with a sigmoid, ReLU, ELU, softplus, or LeakyReLU activation function expresses no nonzero function  in  $L^{p}(\RR \times \RR^+)(p \in [1, \infty))$. Together results of \cite{width}, \cite{deeprelu}, we provide a new perspective on the advantage of multiple hidden layers in the context of ReLU networks.
%\item We first study a link between the representational power and the generalization error of neural networks, with a motivating example.
%\end{itemize}

In dimension two, our results on shallow networks are summarized in Figure \ref{fig1}: (a) If $\Omega$ is
\begin{wrapfigure}{r}{0.5\textwidth}\label{fig1}
\centering
\includegraphics[width=0.5\textwidth]{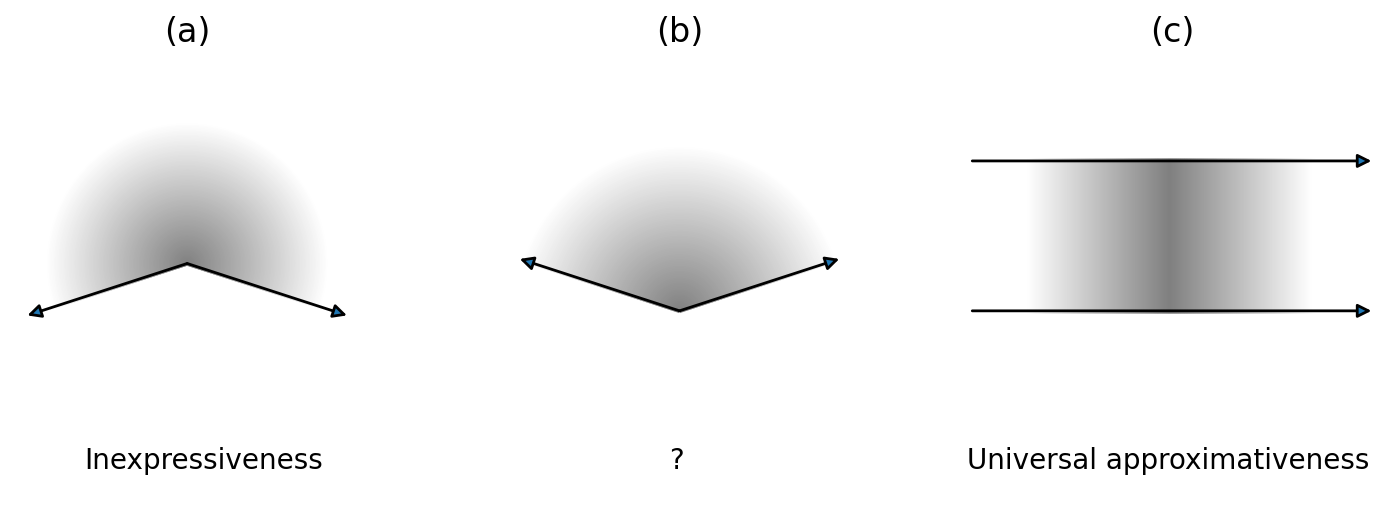}
\caption{Expressivity of shallow networks on two dimensional cones}
\label{summary}
\end{wrapfigure}
a cone formed  by a straight or reflex angle, then shallow sigmoid, ReLU, ELU, softplus, or LeakyReLU networks express no nonzero integrable functions defined on $\Omega$; (b) If $\Omega$ is a cone formed by an acute, right or obtuse angle, the representational power of neural networks on $\Omega$ is unknown; (c) If $\Omega$ is a domain bounded by two parallel lines, then shallow monotone sigmoid, ReLU, ELU, softplus, or LeakyReLU networks can arbitrarily well approximate any function in $L^p(\Omega)(p \in [2, \infty))$. A domain of type (b) is called a two-dimensional light cone and is the mathematical model of the spacetime. The representational power of neural networks on general light cones seems an interesting open problem because they are natural spaces for machine learning applications \cite{Nickel2018LearningCH},\cite{spacetime}. 
%\cite{spacetime0}. \cite{spacetime1} \cite{spacetime2}

%\begin{figure}[ht]\label{fig1}
%\vskip 0.2in
%\begin{center}
%\centerline{\includegraphics[width=\columnwidth]{}}
%\caption{(a) On the complement of the past (or future) light cone, a shallow network often expresses no non-zero integrable functions. (b) On the future light cone, a shallow network often does express integrable functions while what remains open is its approximation capability. (c) On a infinite cyclinder a shallow network often provides $L^p(p \in [2, \infty))$ universal approximation. }
%\label{summary}
%\end{center}
%\vskip -0.2in
%\end{figure}

%The organization of this article is as follows. In section \ref{sectionmain} we introduce the universal approximation theory of shallow networks, and present Theorem \ref{maintheorem} which tells us that for $p \ge 2$ the universal approximation property in $L^p(\RR \times [0,1]^n)$ is equivalent to the universal approximation property in $L^p(\RR)$. In section \ref{ridge} we present Proposition \ref{prop} which tells us that under some conditions a finite sum of ridge functions is not an integrable function on $\RR \times \RR^+$.  In section \ref{sectionbounded} we discuss the expressivity of shallow networks with bounded activation functions on unbounded domains.  In section \ref{unbounded} we discuss the expressivity of shallow networks with ReLU and some other popular unbounded activation functions on unbounded domains.

Notations: $\RR, \NN$ and $\RR^{+}$ are real numbers, natural numbers and positive real numbers; $\RR^n$ is the $n$-dimensional Euclidean space; $\langle , \rangle$ is the dot product or the target domain; $\tau_{\varrho}$ and $L_{y}$ are defined in (\ref{528}); $\mathcal{S}_{n}(\phi)$ is the space of functions defined on $\RR^n$ that can be expressed by a shallow $\phi$-network; $\widehat{h}$ is the Fourier transform of $h$; $L^p(X)$ is the space of $L^p$-integrable functions on $X$ and $||f||_{L^p}$ is the $L^p$ norm of $f$; $\phi_{\mathcal{D}}$ is the characteristic function of $\mathcal{D}$; $\lambda_n$ is the Lebesgue measue on $\RR^n$; $\Delta^n$ are difference operators.

\section{Neural networks}
\paragraph{Activation functions}
We introduce some activation functions, which are popular choices in practice. A sigmoid or sigmoidal function such as Logistic function is a real-valued function $\phi$ that satisfies
\begin{align*}
\phi(t) &\to \left\{
\begin{array}{rcl}
1 & ~ as&{t\to +\infty,}\\
0 & ~ as&{t\to -\infty}. \\
\end{array}\right.
\end{align*}
The following unbounded functions are commonly used recently \cite{relu}, \cite{elu}:
\begin{align*}
\textup{ReLU}(x) &= \left\{
\begin{array}{rcl}
x & ~ if&{x>0,}\\
0 & ~ if&{x\leq 0};
\end{array}\right. \\
\textup{ELU}(x) & = \left\{
\begin{array}{ccl}
x & ~ if&{x>0,}\\
\alpha(e^{x}-1)  & ~ if&{x\leq 0};
\end{array}\right. \\
\textup{Softplus}(x) &= \log(e^x+1); \\
\textup{LeakyReLU}(x) &= \left\{
\begin{array}{ccl}
x & ~ if&{x>0,}\\
\alpha x  & ~ if&{x\leq 0, \ \ \alpha \neq 1.}
\end{array}\right.
\end{align*}
It is useful to define the following notions. Let $\mathcal{P}$ be some property such as bounded, differentiable, monotone, or Lipschitz. A function $\phi: U \to \RR$ is called essentially $\mathcal{P}$ if there exists a measure zero set $Z$ such that $\phi|_{U \setminus Z}$ is $\mathcal{P}$; eventually $\mathcal{P}$ if there exists $X>0$ such that $\phi|_{(X, \infty)}$ and $\phi|_{(-\infty, -X)}$ are $\mathcal{P}$; eventually essentially $\mathcal{P}$ if there exists $X>0$ such that $\phi|_{(X, \infty)}$ and $\phi|_{(-\infty, -X)}$ are essentially $\mathcal{P}$. We write $\lim_{t \to x}\phi(t) = c, a.e.$, if there exists a measure zero set $Z$ such that  $\lim_{t \notin Z, t \to x}\phi(t) = c$.
\begin{example}
\textup{ReLU} and \textup{Softplus} are monotone. \textup{ELU} and \textup{LeakyReLU} are eventually monotone. The first derivative of \textup{Softplus} is monotone. The first derivatives (exist as measurable functions) of  \textup{ReLU}, \textup{ELU}, and \textup{LeakyReLU} are eventually monotone.
\end{example}
\paragraph{Representation spaces of neural networks}
Given an activation function $\phi$, a point $y\in\RR^n$, a real number $\varrho$, we define functions $\phi^{\tau_{\varrho}}, \phi^{L_{y}}, \phi^{\tau_{\varrho} L_{y}}$ by
\begin{align}\label{528}
\phi^{\tau_{\varrho}}:& x \in \RR \mapsto \phi(x+\varrho), \nonumber \\
\phi^{L_{y}}:& x \in \RR^{n} \mapsto \phi(\langle y, x\rangle),  \\
\phi^{\tau_{\varrho} L_{y}} &= (\phi^{\tau_{\varrho}})^{L_{y}}. \nonumber
\end{align}
Because $(\phi^{\tau_{\varrho}})^{L_{y}}(x) \mathopen=\mathclose\phi^{\tau_{\varrho}}(\langle y, x\rangle) \mathopen=\mathclose \phi(\langle y, x\rangle\mathopen+\mathclose\varrho)$, $\phi^{\tau_{\varrho} L_{y}}$ is simply the classical neuron model with activation function $\phi$ and parameters $\{y, \varrho\}$. A single hidden layer neural network with activation function $\phi$ (called a shallow $\phi$ network) attempts to approximate a signal $\mathcal{F}$ by $F(x)=\sum_{i=1}^{k}t_i\phi^{\tau_{\varrho_i} L_{y_i}}(x)$, where $k$ is a hyperparameter, and $y_i,\varrho_i,t_i$ are parameters subject to training. A function $F$ defined on $\RR^n$ is represented (or expressed) by a shallow $\phi$ network if there exist $k \mathopen\in\mathclose \NN, y_i\mathopen \in\mathclose \RR^{n}, t_i\mathopen \in\mathclose \RR, \varrho_i\mathopen \in\mathclose \RR$ such that $F = \sum_{i=1}^{k}t_i\phi^{\tau_{\varrho_i} L_{y_i}}.$ The space of all these functions is denoted by $\mathcal{S}_{n}(\phi)$.
\paragraph{Universal approximator}
The closure of a subset $\mathcal{S} \subset X$ of a topological space $X$ is denoted by $\overline{\mathcal{S}}$. Let $X$ be a function space on $\Omega \subset \RR^n$ with some topology. The shallow $\phi$ network (or $\mathcal{S}_{n}(\phi)$) is called a universal approximator in $X$ if $\overline{\mathcal{S}_{n}(\phi) \cap X}=X.$

\section{Approximation capabilities}\label{section3}
In this section, we show that shallow monotone sigmoid, ReLU, ELU, softplus, or LeakyReLU networks are universal approximators in  $L^p(\RR\times[0,1]^n)(p\ge2)$.
\subsection{A Lifting theorem}
We set some notations and recall some facts from harmonic analysis. The canonical Lebesgue measure on $\RR^m$  is denoted by $\lambda_{m}.$ The indicator function of a set $A$ is denoted by $I_A.$ We use $p_0$ and $p_1$ for the canonical projection of $\RR \times \RR^n$ to $\RR$ and $\RR^n$.
\begin{center}
\begin{tikzcd} &\RR \times \RR^n \arrow[rd, "p_1"] \arrow[ld, "p_0"']  & \\ \RR && \RR^n \end{tikzcd}
\end{center}
If $f \in L^1(\RR^n)$, then the Fourier transform of $f$ is defined by the integral form
\begin{align}\label{f1}
\widehat{f}(\xi) = \int_{x \in \RR^n} e^{-2\pi i\langle x, \xi \rangle}f(x)\mathrm{d}x,   \ \ \  \xi \in \RR^n.
\end{align}
Let $\mathscr{S}(\RR^n) \subset L^1(\RR^n)$ be the Schwartz space on $\RR^n$ (\cite{Fourier}), which is a Frechet space. Let $\mathscr{S}'$ be the space of temperate distributions, which is the dual of $\mathscr{S}$. The Fourier transform of  $u \in \mathscr{S}'$ is
\begin{align}\label{f2}
\widehat{u}(\varphi) = u(\widehat{\varphi}),   \ \ \  \varphi \in \mathscr{S},
\end{align}
where $\widehat{u} \in \mathscr{S}'$. For $p \in [1, \infty]$ there is a natural embedding $L^p(\RR^n) \to \mathscr{S}'$(\cite{Fourier}[p.135]). If $f \in L^1(\RR^n)$, then its Fourier transform defined via (\ref{f2}) agrees with its integral form (\ref{f1}). In the proof of Theorem 3.1, we need to show some function $h \in L^{q}(\RR^n)$ is zero. Because Fourier transformation is an isomorphism of $\mathscr{S}'$(\cite{Fourier}[Theorem IX.2]), it suffices to show $\widehat{h} =0.$

Our approximation results are all based on the following lifting theorem.
\begin{theorem}\label{maintheorem}
Let $n \in \NN$, $2\mathopen\leq\mathclose p\mathopen<\mathclose\infty$ and $\Omega = \RR \times [0, 1]^n$. If a real-valued function $\phi$ defined on the real line satisfies $\overline{\mathcal{S}_{1}(\phi) \cap L^{p}(\RR)} = L^{p}(\RR)$, then $\overline{\mathcal{S}_{n+1}(\phi) \cap L^{p}(\Omega)} = L^{p}(\Omega).$
\end{theorem}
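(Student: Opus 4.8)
We prove the statement by duality. Since $1\le p<\infty$, $(L^{p}(\Omega))^{*}=L^{q}(\Omega)$ with $\tfrac1p+\tfrac1q=1$, and the conclusion $\overline{\mathcal{S}_{n+1}(\phi)\cap L^{p}(\Omega)}=L^{p}(\Omega)$ is, by the Hahn--Banach theorem, equivalent to the following: every $h\in L^{q}(\Omega)$ with $\int_{\Omega}h\,F\,\mathrm{d}s\,\mathrm{d}u=0$ for all $F\in\mathcal{S}_{n+1}(\phi)\cap L^{p}(\Omega)$ is zero. Here we write a point of $\Omega$ as $(s,u)$ with $s\in\RR$ and $u\in[0,1]^{n}$.

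The idea is to test the annihilation hypothesis against a sub-family of networks built from the one-dimensional ones. Fix $b\in\RR^{n}$ and $G\in\mathcal{S}_{1}(\phi)\cap L^{p}(\RR)$, say $G=\sum_{j}t_{j}\phi(\alpha_{j}\,\cdot\,+\varrho_{j})$. Then $(s,u)\mapsto G(s+\langle b,u\rangle)=\sum_{j}t_{j}\phi\big(\langle(\alpha_{j},\alpha_{j}b),(s,u)\rangle+\varrho_{j}\big)$ belongs to $\mathcal{S}_{n+1}(\phi)$, and, because Lebesgue measure on $\RR$ is translation invariant,
\[
\int_{\Omega}\big|G(s+\langle b,u\rangle)\big|^{p}\,\mathrm{d}s\,\mathrm{d}u=\int_{[0,1]^{n}}\|G\|_{L^{p}(\RR)}^{p}\,\mathrm{d}u=\|G\|_{L^{p}(\RR)}^{p}<\infty,
\]
so it also belongs to $L^{p}(\Omega)$. (Crucially, an individual neuron $\phi(\langle y,\cdot\rangle+\varrho)$ need not be $p$-integrable on $\Omega$; this ridge construction is the device that produces honest elements of $\mathcal{S}_{n+1}(\phi)\cap L^{p}(\Omega)$ out of the hypothesis.) Hence $\int_{\Omega}h(s,u)\,G(s+\langle b,u\rangle)\,\mathrm{d}s\,\mathrm{d}u=0$. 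For fixed $b$, the map $G\mapsto\int_{\Omega}h(s,u)\,G(s+\langle b,u\rangle)\,\mathrm{d}s\,\mathrm{d}u$ is a linear functional on $L^{p}(\RR)$ of norm $\le\|h\|_{L^{q}(\Omega)}$ by Hölder and the identity above, and it vanishes on $\mathcal{S}_{1}(\phi)\cap L^{p}(\RR)$, which is dense in $L^{p}(\RR)$ by hypothesis; hence it vanishes on all of $L^{p}(\RR)$. Substituting $t=s+\langle b,u\rangle$ and applying Fubini, this reads
\[
\int_{\RR}\psi(t)\,H_{b}(t)\,\mathrm{d}t=0\quad\text{for every }\psi\in L^{p}(\RR),\qquad H_{b}(t):=\int_{[0,1]^{n}}h\big(t-\langle b,u\rangle,u\big)\,\mathrm{d}u .
\]
Minkowski's integral inequality and Hölder on $[0,1]^{n}$ give $H_{b}\in L^{q}(\RR)$ with $\|H_{b}\|_{L^{q}(\RR)}\le\|h\|_{L^{q}(\Omega)}$, so $H_{b}=0$ in $L^{q}(\RR)$ for every $b\in\RR^{n}$.

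It remains to deduce $h=0$ from the vanishing of all these oblique marginals $H_{b}$, and here the Fourier transform enters. For a.e.\ $u$ we have $h(\cdot,u)\in L^{q}(\RR)$, so (Hausdorff--Young) its Fourier transform $\widetilde h(\cdot,u)\in L^{q'}(\RR)$ is well defined; $\widetilde h$ is then a measurable, locally integrable function on $\RR\times[0,1]^{n}$, so $\widetilde h(\xi,\cdot)\in L^{1}([0,1]^{n})$ for a.e.\ $\xi$. Taking the Fourier transform in $t$ of $H_{b}=0$ and interchanging it with the integral over the compact set $[0,1]^{n}$ yields
\[
\int_{[0,1]^{n}}e^{-2\pi i\,\xi\,\langle b,u\rangle}\,\widetilde h(\xi,u)\,\mathrm{d}u=0\quad\text{for a.e.\ }\xi\in\RR\text{ and every }b\in\RR^{n}.
\]
Fix $\xi\neq0$ for which this holds and $\widetilde h(\xi,\cdot)\in L^{1}([0,1]^{n})$; letting $b$ range over $\RR^{n}$ makes $\xi b$ range over all of $\RR^{n}$, so the Fourier transform in $u$ of $\widetilde h(\xi,\cdot)$ (extended by $0$ outside $[0,1]^{n}$) vanishes identically, whence $\widetilde h(\xi,\cdot)=0$ a.e.\ on $[0,1]^{n}$. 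Thus $\widetilde h=0$ a.e.\ on $\RR\times[0,1]^{n}$, and since the Fourier transform is injective on $L^{q}(\RR)\subset\mathscr{S}'(\RR)$ we conclude $h(\cdot,u)=0$ for a.e.\ $u$, i.e.\ $h=0$. (One may instead avoid the partial transform: mollify $h$ in $s$, differentiate the identity $H_{b}^{\varepsilon}\equiv0$ with respect to the shear parameter $b$ at $b=0$ to get $\partial_{t}^{|\alpha|}\!\int_{[0,1]^{n}}u^{\alpha}h_{\varepsilon}(t,u)\,\mathrm{d}u=0$ for every multi-index $\alpha$, so each such moment is a polynomial lying in $L^{q}(\RR)$, hence $0$; since polynomials are dense in $L^{q'}([0,1]^{n})$ this forces $h_{\varepsilon}=0$, and $\varepsilon\to0$ gives $h=0$.)

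The first two paragraphs are essentially formal; the substance of the argument is the last step — encoding ``all oblique marginals of $h$ vanish'' through the Fourier transform and justifying the measure-theoretic interchanges (including that the exceptional set of $\xi$ above can be chosen independently of $b$, by Fubini). The point needing real care is that for $p>2$ one has $q<2$, so the partial transform in $s$ must be read in the Hausdorff--Young / tempered-distribution sense, which is exactly why the $\mathscr{S}'$ machinery is recalled before the statement, rather than as an absolutely convergent integral. The genuinely new ingredient is the choice of test networks: the ridge functions $G(s+\langle b,u\rangle)$ with one-dimensional $L^{p}$ profiles, whose membership in $L^{p}(\Omega)$ rests on translation invariance of $\lambda_{1}$ in the unbounded direction and whose completeness is supplied by the one-dimensional density hypothesis together with the freedom to shear by an arbitrary $b\in\RR^{n}$.
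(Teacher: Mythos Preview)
Your argument is correct, and the first half (Hahn--Banach, the ridge test functions $G(s+\langle b,u\rangle)$, extension by density to all of $L^{p}(\RR)$, and the conclusion $H_{b}=0$ for every $b$) matches the paper's setup essentially verbatim, only with the normalization $y_0=1$ rather than general $y_0\neq0$.

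Where you genuinely diverge from the paper is in the step from ``all oblique marginals $H_{b}$ vanish'' to $h=0$. The paper works with the full $(n{+}1)$-dimensional Fourier transform of $h$ as a tempered distribution, truncates $h$ to compact boxes $\Omega_{k}=[-k,k]\times[0,1]^{n}$, and carries out a fairly intricate geometric argument (the sets $X^{\pm}_{z,k}$, $Y^{\dagger}_{z,k}$ and their translation stability) to show that $\widehat{h_{k}}(z)$ is controlled by the $L^{q}$-mass of $h$ near the moving boundaries, hence $\widehat u$ is supported on the hyperplane $\{0\}\times\RR^{n}$; Hausdorff--Young then forces $\widehat u=0$. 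Your route is a partial Fourier transform in the unbounded variable $s$, followed by the $n$-dimensional Fourier transform in $u$: the identity $\widehat{H_{b}}(\xi)=\int_{[0,1]^{n}}e^{-2\pi i\xi\langle b,u\rangle}\widetilde h(\xi,u)\,\mathrm du=0$ plus the freedom to rescale $b$ (for $\xi\neq0$) kills $\widetilde h(\xi,\cdot)$ for a.e.\ $\xi$. This is shorter and bypasses the truncation geometry entirely; the Fubini/continuity bookkeeping you flag (uniform-in-$b$ exceptional set via a countable dense family and dominated convergence, and the $L^{q}\to L^{p}$ Hausdorff--Young bound to justify the interchanges) is routine once stated. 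Your moment-based alternative at the end is also valid and has the virtue of avoiding Fourier analysis in $u$ altogether. In short: both proofs use $p\ge2$ at exactly the same place (Hausdorff--Young on $L^{q}$, $q\le2$), but your endgame is more direct than the paper's.
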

\begin{proof}
Let $q = p/(p-1)$. Assume $\mathcal{S}_{n+1}(\phi) \cap L^{p}(\RR) = L^{p}(\Omega)$ is not dense in $L^p(\Omega)$. By the Hahn-Banach Theorem, there exists a nonzero $u \in  L^{p}(\Omega)^*$,  equivalently a nonzero real-valued $h \in L^q(\Omega)$, such that if $\varphi \in \mathcal{S}_{n+1}(\phi) \cap  L^{p}(\Omega)$ then
\begin{align}\label{orthogonal}
u(\varphi) = \int_{\Omega} \varphi(x) h(x) dx = 0.
\end{align}
We have $||u|| = ||h||_{L^q(\Omega)}$.  For any $\gamma \in L^{p}(\RR)$ and $y \in \RR \times \RR^n$  with $p_0(y) \neq 0$,  we claim $\gamma^{L_{y}} \in L^p(\Omega)$. By the change of variables $t_0= \langle y, x \rangle, t_i = x_i(1 \leq i \leq n)$, we have
\begin{align}\label{0607}
\left|\left|\gamma^{L_{y}}\right|\right|_{L^p(\Omega)}^p &= \int_{\RR \times [0,1]^n} \left|\gamma(\langle y, x \rangle)\right|^p dx_0dx_1\ldots dx_n  \nonumber \\
&=  |y_0|^{-1} \int_{\RR \times [0,1]^n} |\gamma(t_0)|^p dt_0dt_1\ldots dt_n \nonumber \\
&= |y_0|^{-1}||\gamma||^p_{L^p(\RR)}.
\end{align}
Hence we prove the previous claim. Moreover, by definition, for any $ \gamma \in \mathcal{S}_{1}(\phi)$,$\gamma^{L_{y}} \in \mathcal{S}_{n+1}(\phi)$. Put together, for any $\gamma \in \mathcal{S}_{1}(\phi) \cap L^{p}(\RR)$ and $y \in \RR \times \RR^n$ with $p_0(y) \neq 0$, we have $\gamma^{L_{y}} \in \mathcal{S}_{n+1}(\phi) \cap L^{p}(\Omega)$ and therefore (by (\ref{orthogonal})) 
\begin{align}\label{orthogonal0}
u(\gamma^{L_{y}}) = \int_{\Omega} \gamma^{L_{y}}(x)h(x) dx = 0.
\end{align}
We are going to prove a stronger identity: for any $\gamma \in  L^{p}(\RR)$ and $y \in \RR \times \RR^n$ with $p_0(y) \neq 0$,
\begin{align}\label{orthogonal1}
u(\gamma^{L_{y}})  = \int_{\Omega} \gamma^{L_{y}}(x)h(x) dx = 0.
\end{align}
By the assumption of our theorem, $\overline{\mathcal{S}_{1}(\phi) \cap L^{p}(\RR)} = L^{p}(\RR)$. Therefore, for any $\epsilon >0$, there exists $\gamma_{y, \epsilon} \in \mathcal{S}_{1}(\phi) \cap L^{p}(\RR)$ such that
$||\gamma-\gamma_{y, \epsilon}||_{L^p(\RR)} < ||u||^{-1} |p_0(y)|^{1/p} \epsilon$.
By (\ref{orthogonal0}), we have  $u(\gamma_{y, \epsilon}^{L_{y}})=0$. Using this identity and (\ref{0607}),
\begin{align*}
\left|u(\gamma^{L_{y}})\right| &= \left|u(\gamma^{L_{y}}-\gamma_{y, \epsilon}^{L_{y}})+ u(\gamma_{y, \epsilon}^{L_{y}})\right|  \\
&= \left|u(\gamma^{L_{y}}-\gamma_{y, \epsilon}^{L_{y}})\right| \\
&\leq ||u||\cdot||(\gamma-\gamma_{y, \epsilon})^{L_{y}}||_{L^p(\Omega)}  \\
&=   ||u|| \cdot |p_0(y)|^{-1/p} ||\gamma-\gamma_{y, \epsilon}||_{L^p(\RR)} \\
&\leq \epsilon.
\end{align*}
The above inequality holds for all $\epsilon >0$, so we have verified (\ref{orthogonal1}).
For $k \in \NN$, let $\Omega_k = [-k, k] \times [0,1]^n$. We identify $u \in L^p(\Omega)^*$ as an element of $\mathscr{S}'$ by assigning $\varphi \in \mathscr{S} \to u(\varphi \cdot I_{\Omega})$, and we define $u_k \in \mathscr{S}'$ by setting 
\begin{align*}
u_k: \varphi \in \mathscr{S} \to u(\varphi \cdot I_{\Omega_k}). 
\end{align*}
With respect to the topology of $\mathscr{S}'$ we have $\lim\limits_{k \to \infty} u_{k}=u$, and therefore
\begin{align*}
\lim_{k \to \infty} \widehat{u_{k}} = \widehat{u}.
\end{align*}
Setting $h_k = h\cdot I_{\Omega_k}$, we have $h_k \in L^1(\RR^{n+1})$. By using (\ref{orthogonal}) and the definition of $u_k$,
\begin{align}\label{20200607}
u_k(\varphi) &= u(\varphi\cdot I_{\Omega_k})        \nonumber \\
 & = \int_{\RR^{n+1}} \varphi(x) I_{\Omega_k}(x) h(x) dx \nonumber \\
 &= \int_{\RR^{n+1}}\varphi(x) h_k(x) dx.
\end{align}
Therefore, the temperate distribtuion $u_k$ is represented by an integrable function $h_k\in L^1(\RR^{n+1})$. By the fact(discussed before the proof) that in this case the Fourier transform defined via (\ref{f2}) agrees with its integral form (\ref{f1}),  $\widehat{u_{k}}$ is also represented by $\widehat{h_k}$.   The Fourier transform $\widehat{h_k}$ of $L^1$-integrable $h_k$ is represented by the integral form
\begin{align}\label{fl1}
\widehat{h_k}(\xi) = \int_{\Omega_k} e^{-2\pi i \langle x, \xi \rangle}h(x) dx.
\end{align}
Therefore, if $\varphi \in \mathscr{S}$ then
\begin{align*}
\widehat{u_k}(\varphi) &=  \int_{\RR^{n+1}}\varphi(\xi) \widehat{h_k}(\xi)d\xi  \\
&=  \int_{\RR^{n+1}}\varphi(\xi) \int_{\Omega_k} e^{-2\pi i \langle x, \xi \rangle}h(x) dx d\xi.
\end{align*}
We let $\langle A, B \rangle$ denote $\{\langle a, b \rangle : a \in A, b\in B\}$. Let $\mathcal{K}$ be any compact set in $\RR^+\times \RR^{n}$ and $\kappa=\min{ \{p_0(x):x\in\mathcal{K}\} }$.  There exists a constant $C_1$ such that $ \left| \langle \{0\} \times [0,1]^n, \mathcal{K} \rangle \right|$ is bounded from above by $C_1$. Let $K = 2C_1/\kappa$. If $k>K$, then for all $x  \in \{k\} \times [0,1]^n$ and $z  \in \mathcal{K}$, we have $p_0(x)p_0(z) >k \cdot\kappa>  (2C_1/\kappa) \cdot \kappa = 2C_1$, and $|\langle (0, p_1(x)), z \rangle| < C_1.$ Therefore $\langle x,  z  \rangle = p_0(x) p_0(z) + \langle (0, p_1(x)), z \rangle > C_1.$ By similar arguments, if $k>K$ then for all $x \in \{-k\} \times [0,1]^n$ and $z  \in \mathcal{K}$ we have  $\langle x,  z  \rangle  < -C_1$. These inequalities together with the fact $|\langle \{0\} \times [0,1]^n, \mathcal{K} \rangle |<C_1$ imply if $k >K$, then  
\begin{align}\label{1220}
\langle \{0\} \times [0,1]^n, \mathcal{K} \rangle \cap \langle \{\pm k\} \times [0,1]^n, \mathcal{K} \rangle = \emptyset.
\end{align}
For $z \in \mathcal{K}$, let $X_{z, k}^{\pm}$ denote the set of $x \in \Omega_k$ that satisfies that there exists $x^{\partial} \in \{\pm k\} \times [0, 1]^n$ satisfying $\vec{z} \perp x^{\partial}-x$:
\begin{align*}
X_{z, k}^{\pm} &= \{ x \in \Omega_{k} : \langle z,  x \rangle \in \langle z, \{ \pm k\} \times [0,1]^n \rangle \}
\end{align*}
We refer the reader to Figure \ref{fig2} for the visualization of $X_{z, k}^{\pm}$ in the two dimensional case. 
\begin{figure}[ht]\label{fig2}
\begin{center}
\centerline{\includegraphics[width=0.9\textwidth]{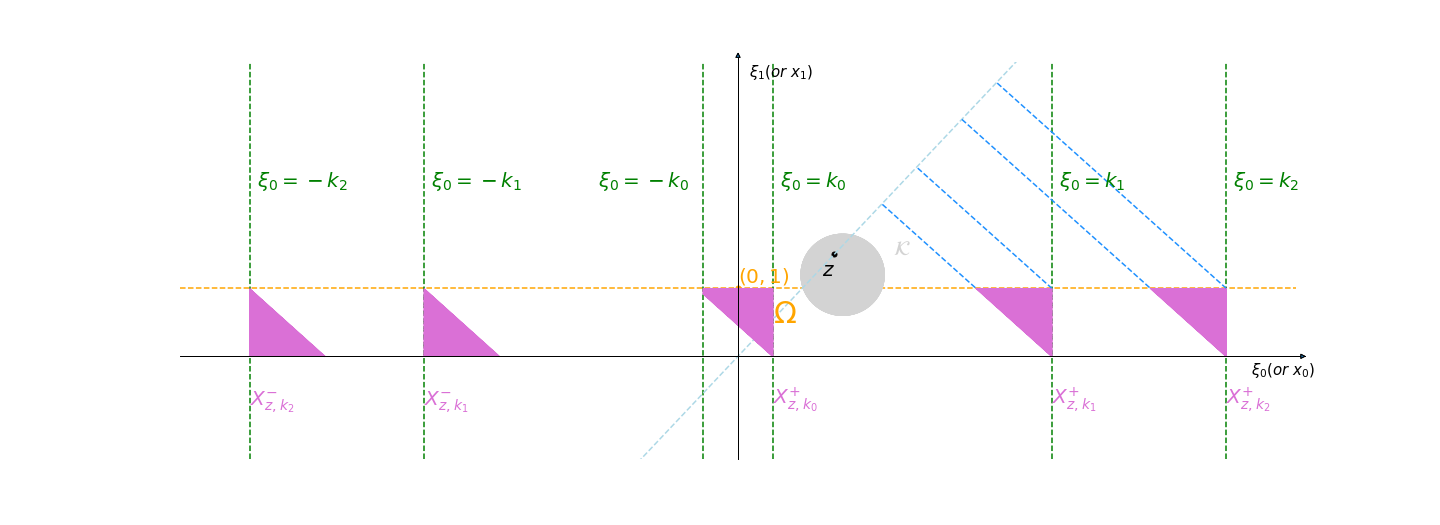}}
\caption{The contribution of $h|_{Y^{\dag}_{z, k}}$, where $Y^{\dag}_{z, k} =\Omega_k \setminus ( X_{z, k}^+\cup X_{z, k}^-) $, to $\widehat{h_k}(z
)$ is zero, therefore estimates of $\widehat{h_k}(z
)$ involve only  $X^{\pm}_{z, k}$. If $k$ is sufficiently large, $X^{\pm}_{z, k}$ become away from $\{0\} \times \RR^n$. This fact is used to prove that $X^{\pm}_{z, k}$ is stable in the sense of (\ref{1221251}).}
\label{fig2}
\end{center}
\vskip -0.2in
\end{figure}
By (\ref{1220}), if $k>K$ and $z \in \mathcal{K}$, then $\langle z, \{0\} \times [0,1]^n \rangle \cap \langle z, \{\pm k\} \times [0,1]^n \rangle = \emptyset$ and therefore $\{0\} \times [0,1]^n \cap X^{\pm}_{z, k} =  \emptyset$. On the other hand, by definition, $X_{z, k}^{+}$ and $X_{z, k}^{-}$ are connected sets, and $\{\pm k\} \times [0, 1]^n \subset X_{z,k}^{\pm}$. Therefore, for $k>K$ and $z \in \mathcal{K}$, $X_{z, k}^{\pm}$ are seperated by the axis $x_0=0$, equivalently
\begin{align}\label{1221}
X_{z, k}^{\pm} \subset \RR^{\pm} \times \RR^n.
\end{align}
Let $k_1,k_2$ be integers greater than $K$, we claim 
\begin{align}\label{1221251}
X_{z, k_2}^{\pm} = (\pm(k_2-k_1), 0,\ldots,0) + X_{z, k_1}^{\pm}. 
\end{align}
This stability property is false if $k_1,k_2$ are small. For example, in Figure \ref{fig2}, $X_{z,k_1}^+$ is bigger than $X_{z,k_0}^++(k_1-k_0,0,\ldots,0)$. For any $x \in X_{z, k_1}^{+}$, by definition, $|p_0(x)|<k_1$, and $\langle z,  x \rangle \in \langle z, \{  k_1\} \times [0,1]^n \rangle$. The following shift of $x$
\begin{align*}
x' = x+ (k_2-k_1, 0,\ldots,0)
\end{align*}
satisfies
\begin{align}\label{hz1}
p_0(x')&= p_0(x)+k_2-k_1<k_2, \nonumber \\
\langle z,  x' \rangle &\in \langle z, \{k_2\} \times [0,1]^n \rangle, \nonumber \\
p_0(x') &\ge-K>-k_2.
\end{align}
The first two follow from $|p_0(x)|<k_1$ and $\langle z,  x \rangle \in \langle z, \{  k_1\} \times [0,1]^n \rangle$. If the last inequality is not true, then $p_0(x') < -K$ and therefore $$\langle z, x' \rangle =  p_0(z) p_0(x') + \langle z, (0, p_1(x')) \rangle <-(2C_1/\kappa)\cdot\kappa+C_1=-C_1.$$
However, for $x'' \in \{k_2\} \times [0,1]^n$,  $$\langle z, x'' \rangle =  p_0(z) p_0(x'') + \langle z, (0, p_1(x'')) \rangle >(2C_1/\kappa)\cdot\kappa-C_1=C_1.$$
The above two inequalities contradict to $\langle z,  x' \rangle \in \langle z, \{k_2\} \times [0,1]^n \rangle$. Therefore, we have proved (\ref{hz1}), which implies $x' \in X_{z, k_2}^{+}$. The above argument applies to all
$x \in X_{z, k_1}^{+}$. Consequently
\begin{align*}
(k_2-k_1, 0,\ldots,0) + X_{z, k_1}^{+} \subset X_{z, k_2}^{+}.
\end{align*}
This relation remains valid if $k_1$ and $k_2$ are replaced by each other, and therefore 
\begin{align*}
(k_1-k_2, 0,\ldots,0) + X_{z, k_2}^{+} \subset X_{z, k_1}^{+}.
\end{align*}
The above two relations prove the $X_{z, \cdot}^{+}$ part of (\ref{1221251}). The $X_{z, \cdot}^{-}$ part of (\ref{1221251}) can be obtained with similar arguments. 

By (\ref{1221}) and (\ref{1221251}), for any $k>K$, the number
\begin{align*} 
\nu_{\mathcal{K}} &:= \lambda_{n+1}\left(\bigcup_{z \in \mathcal{K}} X_{z, k}^+\right)+\lambda_{n+1}\left(\bigcup_{z \in \mathcal{K}} X_{z, k}^-\right)
\end{align*}
is finite and independent of $k$.  Moreover, (\ref{1221251}) leads to
\begin{align}\label{12211503}
\bigcup_{z \in \mathcal{K}} X_{z, k_2}^{\pm} = (\pm(k_2-k_1), 0,\ldots,0) + \bigcup_{z \in \mathcal{K}} X_{z, k_1}^{\pm},
\end{align}
which implies that $\bigcup_{z \in \mathcal{K}} X_{z, k}^{\pm}$ moves to infinity as $k$ goes to infinity (see Figure \ref{fig3}).
\begin{figure}[ht]\label{fig3}
\vskip 0.2in
\begin{center}
\centerline{\includegraphics[width=0.8\textwidth]{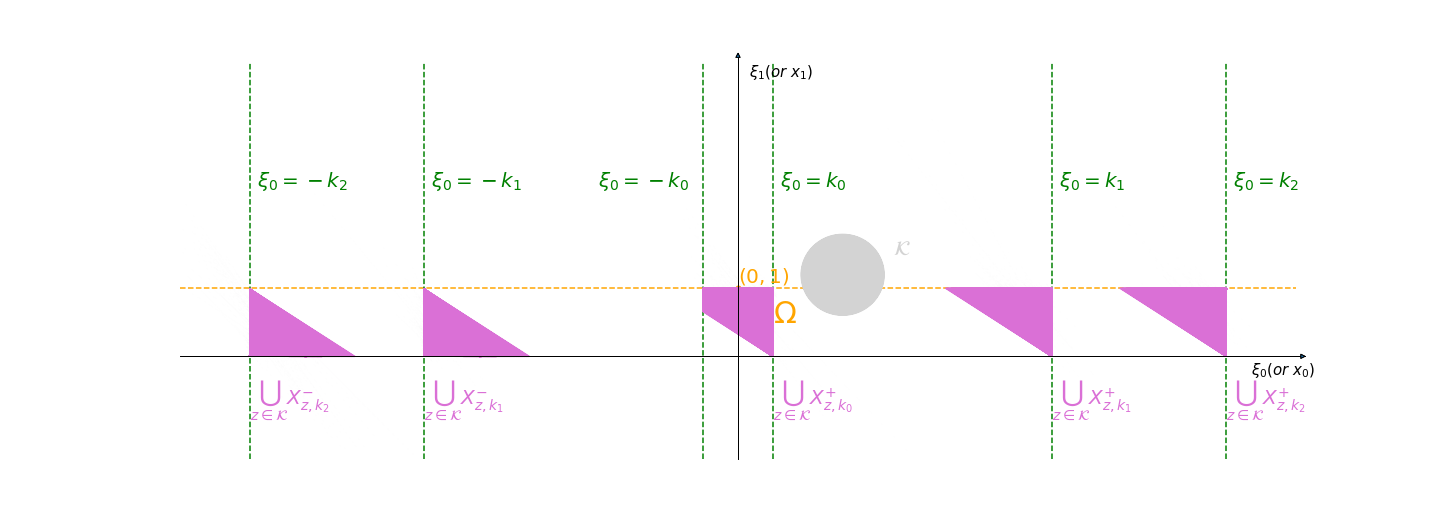}}
\caption{For the purpose of a rigorous limit argument, instead of a single $X^{\pm}_{z, k}$, our estimates rely on $\bigcup_{z \in \mathcal{K}}X^{\pm}_{z, k}$, where $\mathcal{K}$ is a compact set in $\RR^{\pm} \times \RR^{n}$.  If $k$ is sufficiently large $\bigcup_{z \in \mathcal{K}}X^{\pm}_{z, k}$ is also stable in the sense of (\ref{12211503}). }
\label{fig3}
\end{center}
\vskip -0.2in
\end{figure} 
Therefore, for any integrable function $F$, $\lim\limits_{k\to\infty}\int_{\bigcup_{z \in \mathcal{K}} X_{z, k}^{\pm}}F(x)dx =0.$

Write
\begin{align*}
X^{\dag}_{z, k} &= X_{z, k}^+\cup X_{z, k}^- \\
Y^{\dag}_{z, k} &= \Omega_k \setminus X^{\dag}_{z, k}.
\end{align*}
The following should be clear from the Figure \ref{fig2}
\begin{align}\label{12211229}
\omega \in \Omega ~ \text{and} ~ \langle z, \omega \rangle \in \langle z, Y^{\dag}_{z, k}\rangle \Leftrightarrow \omega \in Y^{\dag}_{z, k}.
\end{align}
We are going to verify (\ref{12211229}) rigorously. The $\Leftarrow$ part of (\ref{12211229}) follows from the definition. Now pick any $\omega \in \Omega$ that satisfies $\langle z, \omega\rangle \in \langle z, Y^{\dag}_{z, k}\rangle$. By assumption, there exists $w^{in} \in \Omega_k \setminus X^{\dag}_{z,k}$ such that $\langle z, \omega\rangle = \langle z, w^{in}\rangle.$ Suppose $w \notin \Omega_{k}$, then the line segment $\overline{w,w^{in}}$, which connects the point $w$ outside $\Omega_{k}$ with the point $w^{in}$ inside $\Omega_{k}$, must intersect the boundary of $\Omega_k$:
\begin{align*}
\{t w + (1-t) w^{in}: t \in [0, 1] \} \cap \{\pm k\} \times [0,1]^n \neq \emptyset.
\end{align*}
Let $w^{\partial}$ be the intersection point. We have 
\begin{align*}
\langle z, w^{\partial}\rangle = \langle z, \omega\rangle = \langle z, w^{in}\rangle. 
\end{align*}
By the definition of $X^{\dag}_{z, k}$, the above identity implies $w^{in} \in X^{\dag}_{z, k}$, which contradicts $w^{in} \in \Omega_k \setminus X^{\dag}_{z, k}$. Therefore  $w \notin \Omega_{k}$ is not ture, and instead  $w \in \Omega_{k}$. If $w$ lies in $X^{\dag}_{z, k}$, then there exists $w^{\partial} \in \{\pm k\} \times [0,1]^n$ such that 
\begin{align*}
\langle z, w^{\partial}\rangle = \langle z, \omega\rangle = \langle z, w^{in}\rangle. 
\end{align*}
Again, this identity implies $w^{in} \in X^{\dag}_{z, k}$, which contradicts $w^{in} \in \Omega_k \setminus X^{\dag}_{z, k}$. Therefore,   $w \in X^{\dag}_{z, k}$ is not ture, and instead  $w \notin X^{\dag}_{z, k}$. Putting together, $w \in \Omega_{k} \setminus X^{\dag}_{z, k} = Y^{\dag}_{z, k}$.  The above arguments prove the $\Rightarrow$ part of $(\ref{12211229})$. By (\ref{12211229}), we have
\begin{align}\label{12211429}
I_{Y^{\dag}_{z, k}} = I_{\left\{\omega: \omega \in \Omega, \langle z, \omega \rangle \in \langle z, Y^{\dag}_{z, k}\rangle \right\}}.
\end{align}
Given $z \in \mathcal{K}$, for all $k \ge K$ and $\alpha \in L^{\infty}(\RR)$, we define $\alpha_{k} = \alpha I_{\langle z, Y^{\dag}_{z,k} \rangle}$. Because $\alpha_{k}$ is an a.e. bounded function with bounded support, $\alpha_{k}  \in L^{p}(\RR)$.  Using $h_k = hI_{\Omega_{k}}$ and (\ref{12211429}), we have
\begin{align*}
& ~~~~ \int_{Y^{\dag}_{z, k}}  \alpha(\langle z, x \rangle) h_k(x) d x \\
&= \int_{\Omega}  \alpha(\langle z, x \rangle) h(x)I_{\Omega_{k}}(x)  I_{Y^{\dag}_{z, k}}(x) d x \\
 &= \int_{\Omega}  \alpha(\langle z, x \rangle) h(x)I_{Y^{\dag}_{z, k}}(x) d x \\
&= \int_{\Omega}  \alpha(\langle z, x \rangle) h(x) I_{\{\omega: \omega \in \Omega, \langle z, \omega \rangle \in \langle z, Y^{\dag}_{z, k}\rangle \}}(x) d x \\
&= \int_{\Omega}  \alpha(\langle z, x \rangle) I_{\{\omega: \langle z, \omega \rangle \in \langle z, Y^{\dag}_{z, k}\rangle \}}(x) \left(h(x) I_{\Omega}(x)\right) d x \\
&=  \int_{\Omega}  \alpha_k(\langle z, x \rangle) h(x) d x \\
& = 0.
\end{align*}
The last step of the above calculation follows from $\alpha_k \in L^{p}(\RR)$ and  (\ref{orthogonal1}).
Therefore, for any $\alpha \in L^{\infty}(\RR)$,
\begin{align*}
& ~~~~  \int_{\RR^{n+1}}  \alpha(\langle z, x \rangle) h_k(x) d x \\
&=\int_{X^{\dag}_{z, k}}  \alpha(\langle z, x \rangle) h_k(x) d x + \int_{Y^{\dag}_{z, k}}  \alpha(\langle z, x \rangle) h_k(x) d x \\
&= \int_{X^{\dag}_{z, k}}  \alpha(\langle z, x \rangle) h_k(x) d x.
\end{align*}
By this identity, the contribution of $h|_{Y^{\dag}_{z, k}}$ to $\widehat{h_k}(z)$ is zero, and estimates of $\widehat{h_k}(z)$ involve only  $X^{\pm}_{z, k}$. Let $\alpha(t) := e^{-2\pi i t}$. By (\ref{fl1}), the above identity, and  $X^{\dag}_{z, k} \subset \bigcup_{z \in \mathcal{K}} X^{\dag}_{z, k}$, we have
\begin{align}\label{1515}
\left|\widehat{h_k}(z)\right| &= \left|\int_{\RR^{n+1}}\alpha(\langle z, x \rangle) h_k(x) \mathrm{d}x\right| \nonumber \\
&= \left|\int_{X^{\dag}_{z, k}}\alpha(\langle z, x \rangle)h_k(x) \mathrm{d}x\right|  \nonumber \\
&\leq  ||\alpha^{L_z}||_{L^p(X^{\dag}_{z, k})}||h_k||_{L^q(X^{\dag}_{z, k})}  \nonumber \\
&\leq     \nu_{\mathcal{K}}^{1/p} ||h_k||_{L^q(X^{\dag}_{z, k})}
\end{align}
Let $\psi \in C^{\infty}(\RR^{n+1})$ with $\supp{\psi} \subset \mathcal{K}$, where $\mathcal{K}$ is a compact subset of  $\RR^+\times \RR^{n}$. By (\ref{1515}) and the remark after (\ref{12211503}), for any $\epsilon>0$, there exists $N_{\epsilon}$ such that if $k > N_{\epsilon}$ then 
\begin{align*}
||h||_{L^q\left(\bigcup_{z \in \mathcal{K}} X^{\dag}_{z, k} \right)} <  ||\psi||^{-1}_{L^{\infty}(\RR^{n+1})}   \lambda^{-1}_{n+1}(\mathcal{K}) \nu_{\mathcal{K}}^{-1/p}  \epsilon.
\end{align*}
By using $||h_k||_{L^q\left(\bigcup_{z \in \mathcal{K}} X^{\dag}_{z, k} \right)}\leq ||h||_{L^q\left(\bigcup_{z \in \mathcal{K}} X^{\dag}_{z, k} \right)}$ and (\ref{1515}), the above inequality leads to
\begin{align*}
\left|\widehat{h_k}(z)\right| \leq     \nu_{\mathcal{K}}^{1/p} ||h_k||_{L^q\left(\bigcup_{z \in \mathcal{K}} X^{\dag}_{z, k} \right)} <  ||\psi||^{-1}_{L^{\infty}(\RR^{n+1})}  \lambda^{-1}_{n+1}(\mathcal{K}) \epsilon .
\end{align*}
For all $k > N_{\epsilon}$, we have
\begin{align*}
\left|\widehat{u_k}(\psi)\right| &=  \left|\int_{\RR^{n+1}}\psi(z)  \widehat{h_k}(z) \mathrm{d}z\right| \nonumber \\
&=  \left|\int_{\mathcal{K}}\psi(z)  \widehat{h_k}(z) \mathrm{d}z\right| \nonumber \\
&\leq   \lambda_{n+1}(\mathcal{K}) ||\psi||_{L^{\infty}(\RR^{n+1})}  ||\psi||_{L^{\infty}(\RR^{n+1})}^{-1}   \lambda^{-1}_{n+1}(\mathcal{K})  \epsilon \\
&=  \epsilon.
\end{align*}
Consequently, for all $\psi \in \mathscr{D}(\RR^+ \times \RR^n)$, we have
\begin{align*}
\widehat{u}(\psi) = \lim_{k\to \infty}\widehat{u_k}(\psi) = 0.
\end{align*}
By similar arguments, for all $\psi \in \mathscr{D}(\RR^- \times \RR^n)$, we have
\begin{align*}
\widehat{u}(\psi) = \lim_{k\to \infty}\widehat{u_k}(\psi) = 0.
\end{align*}
The above identities imply $\widehat{u}$ is supported on the hyperplane $\{0\} \times \RR^n \subset \RR^{n+1}$. By the assumption $p \in[2, \infty)$, we have $q \in (1, 2]$, and $u$ is represented by $h\in L^q(\RR^{n+1})$. In this case $\widehat{u}$ is represented by some function in $L^p(\RR^{n+1})$ (\cite{Fourier}[Theorem IX.8]). As a $L^p$ function supported on  $\{0\} \times \RR^n \subset \RR^{n+1}$,  $\widehat{u}$ must be $0$ and therefore $u =0$. This fact contradicts $h\neq 0$, and completes the proof.
\end{proof}

\subsection{Neural networks with bounded activation functions}
We investigate approximation properties of shallow sigmoid networks in $L^p(\RR \times [0,1]^n)(2\mathopen\leq\mathclose p\mathopen <\mathclose\infty)$ by using Theorem \ref{maintheorem} and the following result \cite{SW}[Lemma 3.3]:
\begin{lemma}[Stinchcombe-White]\label{lemmasw}
If an activation function lies in $L^1(\RR) \cap L^p(\RR)$, and $\int_{\RR}\phi(t)dt \neq 0$, then $\overline{\mathcal{S}_{1}(\phi) \cap L^{p}(\RR)} = L^{p}(\RR)$.
\end{lemma}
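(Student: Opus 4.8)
The plan is to argue by duality and Fourier analysis. Since $\phi \in L^1(\RR) \cap L^p(\RR)$, for every $y \neq 0$ and $\varrho \in \RR$ the function $x \mapsto \phi(yx+\varrho) = \phi^{\tau_{\varrho}L_{y}}(x)$ belongs to $L^p(\RR)$ (dilation changes the $L^p$ norm only by the constant $|y|^{-1/p}$) and lies in $\mathcal{S}_1(\phi)$ by definition, so $\mathcal{S}_1(\phi)\cap L^p(\RR)$ is a genuine linear subspace of $L^p(\RR)$. If its closure were not all of $L^p(\RR)$, then by the Hahn--Banach theorem and the identification $L^p(\RR)^{*} = L^q(\RR)$ with $q = p/(p-1) \in (1,2]$ there would be a nonzero $h \in L^q(\RR)$ with
\[
\int_{\RR} \phi(yx+\varrho)\, h(x)\, dx = 0 \qquad \text{for all } y \neq 0,\ \varrho \in \RR.
\]
I would then derive $h = 0$, a contradiction.

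First I would convert this family of orthogonality relations into convolutions. Fixing $y>0$ and substituting $u = yx$ turns the integral into $\tfrac1y \int_{\RR}\phi(u+\varrho)\, h(u/y)\, du = \tfrac1y (\phi * g_y)(\varrho)$, where $g_y(u) := h(-u/y) \in L^q(\RR)$. Hence $\phi * g_y \equiv 0$ for every $y>0$. Taking Fourier transforms is legitimate because $\phi \in L^1(\RR)$ and $g_y \in L^q(\RR)$ with $q \in (1,2]$, so $\widehat{g_y} \in L^p(\RR)$ by Hausdorff--Young and $\widehat{\phi * g_y} = \widehat{\phi}\cdot\widehat{g_y}$ as an identity of $L^p$ functions; thus $\widehat{\phi}\cdot\widehat{g_y} = 0$ almost everywhere.

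The crucial step is to exploit the dilation parameter $y$. Since $\phi \in L^1(\RR)$ its transform $\widehat{\phi}$ is continuous, and $\widehat{\phi}(0) = \int_{\RR}\phi(t)\,dt \neq 0$, so $\widehat{\phi}$ is nonzero on some interval $(-\delta,\delta)$; therefore $\widehat{g_y}$ vanishes a.e. on $(-\delta,\delta)$ for every $y>0$. A one-line change of variables gives the scaling identity $\widehat{g_y}(\xi) = y\,\widehat{h}(-y\xi)$, so $\widehat{h}$ vanishes a.e. on $(-y\delta, y\delta)$ for every $y>0$. Running $y$ through $1,2,3,\dots$ (so that only countably many null sets are involved) forces $\widehat{h} = 0$ a.e. on $\RR$, hence $h = 0$, contradicting $h \neq 0$. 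This establishes $\overline{\mathcal{S}_1(\phi)\cap L^p(\RR)} = L^p(\RR)$.

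The main obstacle is conceptual rather than computational: a nonvanishing integral only yields $\widehat{\phi}\neq 0$ near the origin, and that alone — say, via Wiener's Tauberian theorem applied to pure translates of $\phi$ — does not force $h=0$. One genuinely needs the dilates $\phi(y\,\cdot\,)$, whose spectra sweep out all of $\RR$ as $y$ varies, together with the scaling identity $\widehat{g_y}(\xi) = y\,\widehat{h}(-y\xi)$ that transports the merely local vanishing of $\widehat{g_y}$ into global vanishing of $\widehat{h}$. Everything else — dilation invariance of $L^p$, the Young and Hausdorff--Young bounds, the validity of $\widehat{\phi * g_y} = \widehat{\phi}\,\widehat{g_y}$, and the measure-zero bookkeeping in the limit — is routine (and the argument is cleanest precisely in the range $p \in [2,\infty)$ that is needed here, $q \le 2$ making $\widehat{g_y}$ an honest function).
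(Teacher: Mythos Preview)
The paper does not prove this lemma itself; it is quoted from Stinchcombe--White \cite{SW}, so there is no in-paper argument to compare against. Your duality-plus-Fourier proof is correct and complete in the range $p \ge 2$: once $q = p/(p-1) \le 2$, Hausdorff--Young makes $\widehat{g_y}$ an honest $L^p$ function, the identity $\widehat{\phi * g_y} = \widehat{\phi}\,\widehat{g_y}$ holds pointwise a.e., and the dilation parameter sweeps the local zero set of $\widehat{h}$ across all of $\RR$ exactly as you describe. That range is indeed all that feeds into Theorems~3.5 and~3.8.

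That said, the lemma as stated --- and as invoked in Lemmas~3.4 and~3.7 --- carries no restriction on $p$, and your argument does not cover $1 \le p < 2$: there $q > 2$, $\widehat{g_y}$ is only a tempered distribution, and the product $\widehat{\phi}\cdot\widehat{g_y}$ is not automatically well-defined since $\widehat{\phi}$ is merely continuous, not smooth. The original Stinchcombe--White route avoids Fourier analysis entirely: rescale $\phi$ to an approximate identity $\phi_\sigma(x) = (c\sigma)^{-1}\phi(x/\sigma)$ with $c = \int_{\RR}\phi \neq 0$, approximate a compactly supported continuous $f$ by $f * \phi_\sigma$ in $L^p$, and then approximate that convolution by Riemann sums, each of which is a finite linear combination of dilated translates of $\phi$ and hence lies in $\mathcal{S}_1(\phi) \cap L^p(\RR)$. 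That works uniformly for all $p \in [1,\infty)$. Your approach is slicker where it applies and makes transparent \emph{why} dilations, not just translations, are essential --- the spectral sweeping --- something the constructive proof obscures; but if you want the lemma in the generality the paper states and uses it, you need either that direct argument or a more careful distributional version of your Fourier step.
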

Let $\Delta^1_{\varrho}$ be the 1st difference operator with step $\varrho$ defined by $\Delta_{\varrho}^1[f](x) = f(x+\varrho)-f(x)$, and let $\Delta^n_{\varrho} = \Delta^1_{\varrho} \circ \Delta^{n-1}_{\varrho}$ for all integers $n>1$.The following result  generalizes Lemma 1 of \cite{Funahashi}.
\begin{lemma}\label{integrable unit}
Let $\phi$ be a measurable, essentially bounded, eventually essentially monotone activation function with $\lim_{x \to \infty} \phi(x) \ a.e. \neq \lim_{x \to -\infty} \phi(x) \ a.e.$. For all $\varrho \in \RR$  and $p \in [1, \infty]$,  $\Delta_{\varrho}^1[\phi]\in L^p(\RR)$. Moreover, if $\varrho \neq 0$, then $\int_{\RR} \Delta_{\varrho}^1[\phi](x) dx \neq 0.$
\end{lemma}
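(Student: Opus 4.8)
The plan is to fix a convenient representative of $\phi$ and then split $\RR$ into a bounded middle interval and two tails on which $\Delta^1_\varrho[\phi]$ has constant sign. Since altering $\phi$ on a null set changes neither the $L^p$-class of $\Delta^1_\varrho[\phi]$ nor the value of $\int_\RR\Delta^1_\varrho[\phi]$, I would first take $X>0$ as in the definition of eventual essential monotonicity and, using that a bounded function which is monotone off a null subset of a tail agrees a.e.\ on that tail with a genuinely monotone bounded function, arrange that $\phi$ is bounded on all of $\RR$, monotone on $(X,\infty)$ and on $(-\infty,-X)$, and has honest limits $c_+=\lim_{x\to+\infty}\phi(x)$ and $c_-=\lim_{x\to-\infty}\phi(x)$ with $c_+\neq c_-$. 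The case $\varrho=0$ is trivial, and from the identity $\Delta^1_\varrho[\phi](x)=-\Delta^1_{-\varrho}[\phi](x+\varrho)$ — which exhibits $\Delta^1_\varrho[\phi]$ as a translate of $-\Delta^1_{-\varrho}[\phi]$, hence preserves $L^p$-membership and multiplies the integral by $-1$ together with a (harmless) translation — it suffices to treat $\varrho>0$.

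For the membership claim, note $\|\Delta^1_\varrho[\phi]\|_{L^\infty}\le 2\|\phi\|_{L^\infty}<\infty$, so once $\Delta^1_\varrho[\phi]\in L^1(\RR)$ is established the inclusion $L^1(\RR)\cap L^\infty(\RR)\subset L^p(\RR)$ (via $\int|f|^p\le\|f\|_{L^\infty}^{p-1}\|f\|_{L^1}$) yields the statement for every $p\in[1,\infty]$. To obtain $L^1$, write $\RR=(-\infty,-X-\varrho]\cup[-X-\varrho,X]\cup[X,\infty)$. On the bounded middle interval $\Delta^1_\varrho[\phi]$ is bounded, hence integrable. On $[X,\infty)$ both $x$ and $x+\varrho$ lie in the monotonicity tail, so $\Delta^1_\varrho[\phi]$ has constant sign there; and the change of variables $t=x+\varrho$ gives $\int_X^A\Delta^1_\varrho[\phi](x)\,dx=\int_A^{A+\varrho}\phi-\int_X^{X+\varrho}\phi$, whose right-hand side tends to $\varrho c_+-\int_X^{X+\varrho}\phi$ as $A\to\infty$ since $\phi(u)\to c_+$. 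Hence $\int_X^\infty|\Delta^1_\varrho[\phi]|<\infty$; the left tail is treated identically (with $c_-$), and summing the three contributions gives $\Delta^1_\varrho[\phi]\in L^1(\RR)$.

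For the ``moreover'' part, having $\Delta^1_\varrho[\phi]\in L^1(\RR)$ from the previous step, dominated convergence gives $\int_\RR\Delta^1_\varrho[\phi]=\lim_{A\to\infty}\int_{-A}^A\Delta^1_\varrho[\phi](x)\,dx$. The same change of variables yields, for $A$ large, $\int_{-A}^A\Delta^1_\varrho[\phi](x)\,dx=\int_A^{A+\varrho}\phi-\int_{-A}^{-A+\varrho}\phi$, which converges to $\varrho c_+-\varrho c_-=\varrho(c_+-c_-)\neq 0$ as $A\to\infty$, as required.

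I expect the only delicate point to be the reduction in the first paragraph: one must verify that ``essentially bounded $+$ eventually essentially monotone $+$ prescribed essential limits'' genuinely furnishes a representative that is bounded everywhere and honestly monotone on each tail with those limits, allowing for the possibility that $\phi$ is non-increasing on one tail and non-decreasing on the other, and taking care of how the threshold $X$ interacts with the shift $\varrho$. Everything after that is elementary one-variable calculus — a change of variables and the convergence of $\tfrac1\varrho\int_A^{A+\varrho}\phi$ to the one-sided limit of $\phi$.
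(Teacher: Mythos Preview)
Your proof is correct and follows essentially the same route as the paper: split $\RR$ into a bounded middle interval and two monotone tails, use the constant sign of $\Delta^1_\varrho[\phi]$ on the tails together with the telescoping identity $\int_X^A(\phi(x+\varrho)-\phi(x))\,dx=\int_A^{A+\varrho}\phi-\int_X^{X+\varrho}\phi$ to get $L^1$, and then compute the integral as $\varrho(c_+-c_-)$. The only cosmetic differences are that you pass through $L^1\cap L^\infty\subset L^p$ (the paper instead arranges $|\Delta^1_\varrho[\phi]|\le 1$ on the tails so that $|a|^p\le|a|$) and that you fix a nice representative up front rather than carrying the ``essentially'' qualifiers through the estimates.
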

\begin{proof}
Without loss of generality, we suppose 
\begin{align*}
\lim\limits_{x \to \infty}\phi(x) &= 1 \textup{ a.e.}, \\
\lim\limits_{x \to -\infty}\phi(x) &= 0 \textup{ a.e.}, \\
||\phi||_{L^{\infty}(\RR)} &= C.  
\end{align*}
The first claim of our lemma is obvious for $p = \infty$. Now take $p \in [1, \infty)$ and $\varrho\in\RR$. There exists $X>0$ such that  $|\Delta^1_{\varrho}[\phi]|$ is essentially less than 1 on $\RR \setminus (-X,X)$, and that $\phi$ is essentially monotone on $[X-\varrho, \infty)$ as well as on $(-\infty, -X+\varrho]$. The function $\Delta_{\varrho}^1[\phi]$ is essentially bounded from above by $2C$, and $\Delta_{\varrho}^1[\phi]$ is essentially non-positive (or essentially non-negative) on $(-\infty, X)$ and  $(X, \infty)$.  For $p \in [1, \infty)$, we continue with the following estimate
\begin{align*}
&~~~~ \int_{-\infty}^{+\infty} |\Delta_{\varrho}^1[\phi](x)|^p dx \\
&= \int_{-\infty}^{-X} |\Delta_{\varrho}^1[\phi](x)|^p dx+\int_{-X}^{X} |\Delta_{\varrho}^1[\phi](x)|^p dx+\int_{X}^{+\infty} |\Delta_{\varrho}^1[\phi](x)|^p dx \\
&\leq  \int_{-\infty}^{-X} |\Delta_{\varrho}^1[\phi](x)| dx+2^{p+1}XC^p+\int_{X}^{+\infty} |\Delta_{\varrho}^1[\phi](x)| dx \\
&= \left|\int_{-\infty}^{-X} \Delta_{\varrho}^1[\phi](x) dx\right|+2^{p+1}XC^p+\left|\int_{X}^{+\infty} \Delta_{\varrho}^1[\phi](x) dx\right| \\
&=  \lim_{M \to \infty} \left|\int_{-M}^{-X} \Delta_{\varrho}^1[\phi](x) dx\right| + \lim_{M \to \infty} \left|\int_{X}^{M}  \Delta_{\varrho}^1[\phi](x) dx\right| + 2^{p+1}XC^p \\
&=  \lim_{M \to \infty} \left|\int_{-X}^{-X+\varrho}\phi(x) dx - \int_{-M}^{-M+\varrho}\phi(x) dx \right|+\\
&~~~~~ \lim_{M \to \infty} \left|\int_{M}^{M+\varrho}\phi(x) dx - \int_{X}^{X+\varrho}\phi(x) dx \right| + 2^{p+1}XC^p \\
& =  \left|\int_{-X}^{-X+\varrho} \phi(x) dx \right|+\left|\varrho - \int_{X}^{X+\varrho} \phi(x) dx \right| + 2^{p+1}XC^p  \\
&< \infty.
\end{align*}
Therefore for all $\varrho \in \RR$ and $p \in [1, \infty)$, $\Delta_{\varrho}^1[\phi] \in L^p(\RR)$. Moreover,
\begin{align*}
&~~~~ \int_{-\infty}^{+\infty} \Delta_{\varrho}^1[\phi](x) dx  \\ 
&=  \lim_{Y \to \infty} \int_{-Y}^{Y} (\phi(x+\varrho) - \phi(x)) dx \\
&= \lim_{Y \to \infty} \left(\int_{Y}^{Y+\varrho} \phi(x) dx - \int_{-Y}^{-Y+\varrho} \phi(x) dx\right) \\
&= \varrho.
\end{align*}
This identity completes the proof of our lemma.
\end{proof}

For any $\phi$ that satisfies the requirements of Lemma \ref{integrable unit}, there  exists $\varrho$ such that $\Delta_{\varrho}^1[\phi] \in L^1(\RR) \cap L^p(\RR)$ and $\int_{-\infty}^{+\infty}\Delta_{\varrho}^1[\phi](x) dx \neq 0.$ By Lemma \ref{lemmasw},  $\mathcal{S}_{1}(\Delta_{\varrho}^1[\phi])$ is dense in $L^{p}(\RR)$. Because $\mathcal{S}_{1}(\Delta_{\varrho}^1[\phi]) \subset \mathcal{S}_{1}(\phi)$, $\overline{\mathcal{S}_{1}(\phi) \cap L^{p}(\RR)} = L^{p}(\RR).$ Therefore, we have proved
\begin{lemma}\label{sigmoid}
Let $\phi$ be a measurable, essentially bounded, eventually essentially monotone activation function with $\lim \limits_{x \to \infty} \phi(x) \ a.e. \neq \lim\limits_{x \to -\infty} \phi(x) \ a.e.$. For $1\mathopen\leq\mathclose p\mathopen\leq\mathclose\infty$, $\overline{\mathcal{S}_{1}(\phi) \cap L^{p}(\RR)} = L^{p}(\RR).$
\end{lemma}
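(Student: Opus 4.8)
The plan is to deduce this from the Stinchcombe--White lemma (Lemma \ref{lemmasw}) after replacing $\phi$ by a suitable first difference, which the hypotheses force to be an honest integrable bump with nonvanishing total mass, and then to observe that a shallow network built from that difference is itself a shallow $\phi$-network.

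Concretely, I would fix any $\varrho \neq 0$ and set $\psi := \Delta^1_{\varrho}[\phi]$, so that $\psi(x) = \phi(x+\varrho) - \phi(x)$. Since $\phi$ is measurable, essentially bounded, eventually essentially monotone, with $\lim_{x\to\infty}\phi(x) \neq \lim_{x\to-\infty}\phi(x)$ (a.e.), Lemma \ref{integrable unit} applies and gives, on the one hand, $\psi \in L^1(\RR)$ (use the exponent $1$ there) and, on the other hand, $\psi \in L^p(\RR)$ (use the exponent $p$), so $\psi \in L^1(\RR)\cap L^p(\RR)$; and because $\varrho \neq 0$ it also gives $\int_{\RR}\psi \neq 0$. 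Hence $\psi$ satisfies all the hypotheses of Lemma \ref{lemmasw}, and we conclude $\overline{\mathcal{S}_1(\psi)\cap L^p(\RR)} = L^p(\RR)$.

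Next I would verify the elementary inclusion $\mathcal{S}_1(\psi)\subseteq\mathcal{S}_1(\phi)$. A single $\psi$-neuron is $\psi^{\tau_{\sigma}L_y}(x) = \psi(\langle y,x\rangle+\sigma) = \phi(\langle y,x\rangle+\sigma+\varrho) - \phi(\langle y,x\rangle+\sigma) = \phi^{\tau_{\sigma+\varrho}L_y}(x) - \phi^{\tau_{\sigma}L_y}(x)$, a difference of two $\phi$-neurons; therefore every finite linear combination $\sum_i t_i\,\psi^{\tau_{\sigma_i}L_{y_i}}$ is again a finite linear combination of $\phi$-neurons and lies in $\mathcal{S}_1(\phi)$. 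Intersecting with $L^p(\RR)$ preserves this inclusion, and passing to closures yields $L^p(\RR) = \overline{\mathcal{S}_1(\psi)\cap L^p(\RR)} \subseteq \overline{\mathcal{S}_1(\phi)\cap L^p(\RR)} \subseteq L^p(\RR)$, which forces equality and finishes the proof.

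I do not expect a genuine obstacle here: the entire analytic content — that the difference of a bounded, eventually monotone, step-like function with unequal end limits is genuinely in $L^1\cap L^p$ with nonvanishing integral — is precisely the already-established Lemma \ref{integrable unit}, while the density statement is quoted as Lemma \ref{lemmasw}. The only points that need a little care are (i) invoking Lemma \ref{integrable unit} with both exponents $1$ and $p$ so as to land in $L^1\cap L^p$, and (ii) choosing the shift $\varrho$ nonzero so that $\int_{\RR}\psi \neq 0$; everything else is bookkeeping.
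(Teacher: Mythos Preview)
Your proposal is correct and follows exactly the same route as the paper: take $\psi=\Delta_{\varrho}^1[\phi]$ with $\varrho\neq 0$, invoke Lemma~\ref{integrable unit} to land $\psi$ in $L^1\cap L^p$ with nonzero integral, apply Lemma~\ref{lemmasw} to get density of $\mathcal{S}_1(\psi)$, and then use the inclusion $\mathcal{S}_1(\Delta_{\varrho}^1[\phi])\subset \mathcal{S}_1(\phi)$. Your write-up is in fact slightly more detailed than the paper's, since you spell out why a $\psi$-neuron is a difference of two $\phi$-neurons.
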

Lemma \ref{sigmoid} and Theorem \ref{maintheorem} implies the following theorem, which applies to sigmoids.
\begin{theorem}\label{theorembounded11}
Let $\Omega = \RR \times [0, 1]^n$, $\phi$ be a measurable, essentially bounded, eventually essentially monotone activation function with $\lim \limits_{x \to \infty} \phi(x) \ a.e. \neq \lim\limits_{x \to -\infty} \phi(x) \ a.e.$. For $2\mathopen\leq\mathclose p\mathopen<\mathclose\infty$,
\begin{align*}
\overline{\mathcal{S}_{n+1}(\phi) \cap L^{p}(\Omega)} = L^{p}(\Omega).
\end{align*}
\end{theorem}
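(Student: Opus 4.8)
The plan is to deduce this theorem immediately by chaining two results already in hand: Lemma \ref{sigmoid} for the base ($n=0$) case, and Theorem \ref{maintheorem} for the lifting step. First I would check that the hypotheses imposed on $\phi$ here are verbatim those of Lemma \ref{sigmoid}: $\phi$ is measurable, essentially bounded, eventually essentially monotone, and has distinct essential limits at $+\infty$ and $-\infty$. Hence Lemma \ref{sigmoid} applies and gives $\overline{\mathcal{S}_{1}(\phi)\cap L^{p}(\RR)}=L^{p}(\RR)$ for every $p\in[1,\infty]$, in particular for the range $2\leq p<\infty$ relevant here. (Recall this is itself obtained by choosing $\varrho\neq 0$ so that $\Delta_{\varrho}^1[\phi]\in L^1(\RR)\cap L^p(\RR)$ with nonzero integral via Lemma \ref{integrable unit}, applying the Stinchcombe–White Lemma \ref{lemmasw} to $\Delta_{\varrho}^1[\phi]$, and using $\mathcal{S}_{1}(\Delta_{\varrho}^1[\phi])\subset\mathcal{S}_{1}(\phi)$; but at the level of the present theorem none of this needs to be revisited.)

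Second, with the one-dimensional density established, I would invoke Theorem \ref{maintheorem} with the same $\phi$, the same $p\in[2,\infty)$, and $\Omega=\RR\times[0,1]^n$. That theorem takes exactly the hypothesis $\overline{\mathcal{S}_{1}(\phi)\cap L^{p}(\RR)}=L^{p}(\RR)$ and returns $\overline{\mathcal{S}_{n+1}(\phi)\cap L^{p}(\Omega)}=L^{p}(\Omega)$, which is the desired conclusion.

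There is essentially no obstacle remaining: the substantive work lives in Theorem \ref{maintheorem} (the Hahn–Banach reduction to an annihilating $h\in L^q(\Omega)$, the Fourier-side analysis showing $\widehat{u}$ is forced onto the hyperplane $\{0\}\times\RR^n$ via the stability of the sets $X^{\pm}_{z,k}$, and the Hausdorff–Young-type regularity of $\widehat{u}$ for $q\in(1,2]$) and in Lemmas \ref{integrable unit}–\ref{sigmoid}. The only compatibility point worth noting explicitly is that Lemma \ref{sigmoid} covers all $p\in[1,\infty]$ whereas Theorem \ref{maintheorem} requires only $p\in[2,\infty)$, so the two ranges match and the composition is legitimate. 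I would therefore present the proof as a two-line argument: "By Lemma \ref{sigmoid}, $\overline{\mathcal{S}_{1}(\phi)\cap L^{p}(\RR)}=L^{p}(\RR)$; by Theorem \ref{maintheorem}, $\overline{\mathcal{S}_{n+1}(\phi)\cap L^{p}(\Omega)}=L^{p}(\Omega)$."
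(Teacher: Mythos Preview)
Your proposal is correct and matches the paper's approach exactly: the paper states Theorem \ref{theorembounded11} as an immediate consequence of Lemma \ref{sigmoid} and Theorem \ref{maintheorem}, with no additional argument. Your two-line summary is precisely what the paper does.
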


\subsection{Neural networks with unbounded activation functions}
To investigate the approximation capabilities of a shallow ReLU, ELU, Softplus, or LeakyReLU network, we start with the following lemma.
\begin{lemma}
Let $n \in \NN$, and $\phi$ be an eventually $(n\mathopen+\mathclose1)$-times differentiable activation function. If $\phi^{(n)}$ is eventually monotone, then $\Delta_1^n[\phi]$ is also eventually monotone.
\end{lemma}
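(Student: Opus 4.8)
The plan is to reduce the statement to the elementary fact that, for a monotone function $g$, the superposition $x\mapsto \int_{[0,1]^n} g(x+t_1+\cdots+t_n)\,dt_1\cdots dt_n$ is again monotone. The bridge is the integral representation
\begin{align*}
\Delta_1^n[\phi](x) = \int_{[0,1]^n} \phi^{(n)}(x+t_1+\cdots+t_n)\,dt_1\cdots dt_n,
\end{align*}
valid for every $x$ such that the whole segment $[x,x+n]$ lies in a region where $\phi$ is $n$-times differentiable with locally integrable $n$-th derivative. Since $\phi$ is eventually $(n+1)$-times differentiable, there is $X_0>0$ with $\phi$ being $(n+1)$-times differentiable on $(X_0,\infty)$ and on $(-\infty,-X_0)$; on each such ray $\phi',\dots,\phi^{(n)}$ are continuous, hence locally integrable, so the representation applies whenever $[x,x+n]$ is contained in one of the two rays.

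First I would prove the representation by induction on $n$. For $n=1$ it is the fundamental theorem of calculus, $\Delta_1^1[\phi](x)=\phi(x+1)-\phi(x)=\int_0^1\phi'(x+t)\,dt$. For the inductive step, observe that the unit shift commutes with differentiation, hence so does every $\Delta_1^k$, so $(\Delta_1^{n-1}[\phi])'=\Delta_1^{n-1}[\phi']$; applying the fundamental theorem of calculus once more and then the inductive hypothesis to $\phi'$ gives
\begin{align*}
\Delta_1^n[\phi](x) &= \Delta_1^1\bigl[\Delta_1^{n-1}[\phi]\bigr](x) = \int_0^1 \bigl(\Delta_1^{n-1}[\phi]\bigr)'(x+t_n)\,dt_n \\
&= \int_0^1 \Delta_1^{n-1}[\phi'](x+t_n)\,dt_n = \int_{[0,1]^n}\phi^{(n)}(x+t_1+\cdots+t_n)\,dt_1\cdots dt_n,
\end{align*}
all intermediate quantities being well defined and the steps legitimate as long as $[x,x+n]$ stays inside a ray on which $\phi$ is $(n+1)$-times differentiable.

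Second I would transfer monotonicity. Since $\phi^{(n)}$ is eventually monotone, enlarge $X_0$ to some $X\ge X_0$ so that $\phi^{(n)}$ is monotone on $(X,\infty)$ and monotone on $(-\infty,-X)$. Writing $|t|:=t_1+\cdots+t_n\in[0,n]$, for $X<x<x'$ and any $t\in[0,1]^n$ one has $X<x+|t|<x'+|t|$, so $\phi^{(n)}(x+|t|)$ and $\phi^{(n)}(x'+|t|)$ compare in the fixed direction dictated by the monotonicity of $\phi^{(n)}$ on $(X,\infty)$; integrating over $t\in[0,1]^n$ shows $\Delta_1^n[\phi]$ has that same monotonicity on $(X,\infty)$. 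Near $-\infty$ the argument is identical, except that to keep both $x+|t|$ and $x'+|t|$ inside $(-\infty,-X)$ for every $t\in[0,1]^n$ one must restrict to $x<x'<-X-n$; this yields monotonicity of $\Delta_1^n[\phi]$ on $(-\infty,-X-n)$. Taking $\widetilde X=X+n$, we conclude $\Delta_1^n[\phi]$ is monotone on $(\widetilde X,\infty)$ and on $(-\infty,-\widetilde X)$, i.e.\ eventually monotone.

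I expect the only points requiring care to be (i) a clean justification of the integral representation — this is precisely where the hypothesis of $(n+1)$-fold differentiability is used, since it guarantees $\phi^{(n)}$ is $C^1$, hence locally integrable, on the eventual rays — and (ii) the bookkeeping of the eventual intervals, in particular the shrinkage by $n$ near $-\infty$ caused by the width of the summation window. Beyond this there is no substantive obstacle.
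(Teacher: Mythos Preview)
Your proof is correct, and it takes a genuinely different route from the paper's. The paper differentiates first: it writes $(\Delta_1^n[\phi])'(x)=\Delta_1^n[\phi'](x)$ and then invokes the mean value theorem for $n$-th order finite differences to produce a point $\overline{x}\in[x,x+n]$ with $\Delta_1^n[\phi'](x)=\phi^{(n+1)}(\overline{x})$; eventual monotonicity of $\phi^{(n)}$ (together with eventual differentiability) forces a constant sign on $\phi^{(n+1)}$ on each ray, hence on $(\Delta_1^n[\phi])'$. Your argument instead uses the integral representation $\Delta_1^n[\phi](x)=\int_{[0,1]^n}\phi^{(n)}(x+|t|)\,dt$ and transfers monotonicity of $\phi^{(n)}$ directly through the integral, never touching $\phi^{(n+1)}$. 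What this buys you: your approach is more elementary (no high-order mean value theorem), and in spirit it only needs $\phi^{(n)}$ to be locally integrable rather than differentiable on the eventual rays, so it would go through under a weaker hypothesis than the lemma actually states. The paper's approach is a touch shorter and makes more visible use of the full $(n{+}1)$-fold differentiability assumption. Your bookkeeping on the eventual intervals, including the shift by $n$ near $-\infty$, is handled correctly.
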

\begin{proof}
Because $\phi$ is eventually $(n\mathopen+\mathclose1)$-times differentiable, there exists $X_1$ such that if $|x|\mathopen>\mathclose X_1$ then 
\begin{align}\label{difference}
(\Delta_1^{n}[\phi])'(x) = \Delta_1^{n}[\phi'](x) = (\phi')^{(n)}(\overline{x})= (\phi^{(n)})'(\overline{x})
\end{align}
for some $\overline{x} \in [x, x+n].$ Because $\phi^{(n)}$ is eventually monotone and eventually differentiable, there exists $X_2$ such that $\phi^{(n+1)}|_{(X_2, \infty)}$ and $\phi^{(n+1)}|_{(-\infty, -X_2)}$ are non-positive or non-negative. Set $X = \max{\{X_1, X_2+n\}}$. By (\ref{difference}),  $(\Delta_1^{n}[\phi])'|_{(X, \infty)}$ and $(\Delta_1^{n}[\phi])'|_{(-\infty, -X)}$ are non-positive or non-negative. This implies that $\Delta_1^n[\phi]$ is eventually monotone.
\end{proof}
With the above lemma, we prove
\begin{lemma}\label{relu}
Let $1\mathopen\leq\mathclose p\mathopen<\mathclose\infty$, and $\phi$ be \textup{ReLU},  \textup{ELU},  \textup{Softplus}, or \textup{LeakyReLU}.  We have
\begin{align*}
\overline{\mathcal{S}_{1}(\phi) \cap L^{p}(\RR)} = L^{p}(\RR).
\end{align*}
\end{lemma}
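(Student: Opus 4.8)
The plan is to reduce Lemma~\ref{relu} to the bounded--activation case already settled in Lemma~\ref{sigmoid}. Each of \textup{ReLU}, \textup{ELU}, \textup{Softplus}, \textup{LeakyReLU} is asymptotically affine at both ends---slope $1$ at $+\infty$, and slope $0$ (for \textup{ReLU}, \textup{ELU}, \textup{Softplus}) or slope $\alpha$ (for \textup{LeakyReLU}) at $-\infty$---so $\phi \notin L^p(\RR)$ and Lemma~\ref{sigmoid} cannot be invoked for $\phi$ itself. The remedy is to pass to the first difference
\[
\psi := \Delta_1^1[\phi] = \phi^{\tau_1} - \phi, \qquad \psi(x) = \phi(x+1) - \phi(x).
\]
Differencing once cancels the leading affine term at each end, so $\psi$ is continuous with finite limits $\lim_{x\to+\infty}\psi(x) = 1$ and $\lim_{x\to-\infty}\psi(x) \in \{0, \alpha\}$; in particular $\psi$ is bounded on $\RR$ and, since $\alpha \neq 1$ in the \textup{LeakyReLU} case, these two limits are distinct.

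Next I would verify that $\psi$ is eventually monotone, so that the hypotheses of Lemma~\ref{sigmoid} are met. Each of the four functions is eventually twice differentiable (it is affine, hence $C^\infty$, outside a bounded set, while \textup{Softplus} is $C^\infty$ on all of $\RR$), and its first derivative is eventually monotone---this is recorded in the Example (constant for \textup{ReLU} and \textup{LeakyReLU}; equal to $\alpha e^x$ near $-\infty$ and to $1$ near $+\infty$ for \textup{ELU}; monotone everywhere for \textup{Softplus}). The lemma stated immediately above, applied with $n = 1$, then shows that $\Delta_1^1[\phi] = \psi$ is eventually monotone. Hence $\psi$ is measurable, essentially bounded, eventually essentially monotone, and has $\lim_{x\to\infty}\psi(x) \neq \lim_{x\to-\infty}\psi(x)$, i.e.\ it satisfies all the hypotheses of Lemma~\ref{sigmoid}.

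Applying Lemma~\ref{sigmoid} to $\psi$ gives $\overline{\mathcal{S}_{1}(\psi) \cap L^{p}(\RR)} = L^{p}(\RR)$ for every $p \in [1,\infty)$. To conclude, I transfer this density to $\phi$ via the inclusion $\mathcal{S}_{1}(\psi) \subseteq \mathcal{S}_{1}(\phi)$: for any $y, \varrho \in \RR$ one has $\psi^{\tau_{\varrho} L_{y}} = \phi^{\tau_{\varrho+1} L_{y}} - \phi^{\tau_{\varrho} L_{y}}$, so each $\sum_i t_i \psi^{\tau_{\varrho_i} L_{y_i}} \in \mathcal{S}_{1}(\psi)$ equals $\sum_i t_i(\phi^{\tau_{\varrho_i+1} L_{y_i}} - \phi^{\tau_{\varrho_i} L_{y_i}}) \in \mathcal{S}_{1}(\phi)$. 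Therefore $\mathcal{S}_{1}(\psi) \cap L^p(\RR) \subseteq \mathcal{S}_{1}(\phi) \cap L^p(\RR)$, and taking closures gives $L^{p}(\RR) = \overline{\mathcal{S}_{1}(\psi) \cap L^{p}(\RR)} \subseteq \overline{\mathcal{S}_{1}(\phi) \cap L^{p}(\RR)} \subseteq L^{p}(\RR)$, which is the assertion.

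There is no serious obstacle here; the mild point deserving attention is that a \emph{single} difference already renders $\psi$ bounded---so that $n=1$ suffices in the preceding lemma and one does not need $\Delta_1^2[\phi]$---and this holds precisely because each of the four activations is asymptotically affine at $-\infty$ as well as at $+\infty$, making its first difference asymptotically constant at both ends. Everything else is furnished by the Example, the preceding lemma, and Lemma~\ref{sigmoid}.
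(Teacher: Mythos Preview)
Your proposal is correct and follows essentially the same approach as the paper: pass to the first difference $\psi=\Delta_1^1[\phi]$, verify via the preceding lemma (applied with $n=1$) and the Example that $\psi$ is bounded, eventually monotone, and has distinct limits at $\pm\infty$, then invoke Lemma~\ref{sigmoid} for $\psi$ and conclude from $\mathcal{S}_1(\psi)\subset\mathcal{S}_1(\phi)$. Your write-up is in fact a bit more explicit than the paper's about why the inclusion holds and why a single difference suffices.
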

\begin{proof}
If $\phi$ is \textup{ReLU},  \textup{ELU},  \textup{Softplus}, or \textup{LeakyReLU}, then $\phi'$ is eventually monotone. By the previous lemma, $\Delta_1^1[\phi]$ is also eventually monotone. Moreover, we have
\begin{align*}
\Delta_1^1[\textup{ReLU}](t) & \to \left\{
\begin{array}{rcl}
1 & ~ as&{t \to +\infty,}\\
0 & ~ as&{t\to -\infty};
\end{array}\right. \\
\Delta_1^1[\textup{ELU}](t) & \to \left\{
\begin{array}{rcl}
1 & ~ as&{t \to +\infty,}\\
0 & ~ as&{t\to -\infty};
\end{array}\right. \\
\Delta_1^1[\textup{Softplus}](t) & \to \left\{
\begin{array}{rcl}
1 & ~ as&{t \to +\infty,}\\
0 & ~ as&{t\to -\infty};
\end{array}\right. \\
\Delta_1^1[\textup{LeakyReLU}](t) & \to \left\{
\begin{array}{rcl}
1 & ~ as&{t \to +\infty,}\\
\alpha & ~ as&{t \to -\infty};
\end{array}\right. 
\end{align*}
Therefore, $\Delta_1^1[\phi]$ is a bounded, eventually monotone function with different limits at $\pm \infty$. By Lemma \ref{sigmoid}, $\overline{\mathcal{S}_{1}(\Delta_1^1[\phi]) \mathopen\cap\mathclose L^p(\RR)} \mathopen=\mathclose L^p(\RR)$. Because $\mathcal{S}_{1}(\Delta_1^1[\phi]) \mathopen\subset\mathclose \mathcal{S}_{1}(\phi)$, we have $\overline{\mathcal{S}_{1}(\phi) \mathopen\cap\mathclose L^p(\RR)} \mathopen=\mathclose L^p(\RR)$.
\end{proof}
By the above lemma and Theorem \ref{maintheorem}, we have
\begin{theorem}\label{theorembounded}
Let $2\mathopen\leq\mathclose p\mathopen<\mathclose\infty, \Omega \mathopen=\mathclose \RR \mathopen\times\mathclose [0, 1]^n$, and $\phi$ be \textup{ReLU},  \textup{ELU},  \textup{Softplus}, or \textup{LeakyReLU}. We have
\begin{align*}
\overline{\mathcal{S}_{n+1}(\phi) \cap L^{p}(\Omega)} = L^{p}(\Omega).
\end{align*}
\end{theorem}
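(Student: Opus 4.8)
The plan is to obtain Theorem~\ref{theorembounded} as an immediate consequence of two facts already in hand: the one-dimensional density statement Lemma~\ref{relu}, and the Lifting Theorem~\ref{maintheorem}. First I would record that Lemma~\ref{relu} asserts $\overline{\mathcal{S}_1(\phi)\cap L^p(\RR)}=L^p(\RR)$ for every $p\in[1,\infty)$ when $\phi$ is \textup{ReLU}, \textup{ELU}, \textup{Softplus}, or \textup{LeakyReLU}. In particular this holds on the range $p\in[2,\infty)$ that is relevant here, so the single hypothesis of Theorem~\ref{maintheorem} — namely $\overline{\mathcal{S}_{1}(\phi) \cap L^{p}(\RR)} = L^{p}(\RR)$ — is verified for each of these activation functions.

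Second I would invoke Theorem~\ref{maintheorem} with this $\phi$ and with the given $n$: since $\overline{\mathcal{S}_1(\phi)\cap L^p(\RR)}=L^p(\RR)$ and $2\le p<\infty$, the conclusion of the lifting theorem reads $\overline{\mathcal{S}_{n+1}(\phi)\cap L^p(\Omega)}=L^p(\Omega)$ with $\Omega=\RR\times[0,1]^n$, which is precisely the assertion to be proved. Thus the argument is a short chain: Lemma~\ref{relu} $\Rightarrow$ hypothesis of Theorem~\ref{maintheorem} $\Rightarrow$ Theorem~\ref{theorembounded}, with nothing to check beyond the matching of hypotheses.

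There is essentially no obstacle remaining at this stage; the substance of the result is distributed over the two ingredients rather than the present deduction. The nontrivial one-dimensional step is already carried out in Lemma~\ref{relu}, which reduces universal approximation for the unbounded activations to the bounded, eventually monotone case (Lemma~\ref{sigmoid}) by passing to the first difference $\Delta_1^1[\phi]$ — a bounded, eventually monotone function with distinct limits at $\pm\infty$ — together with the Stinchcombe–White criterion (Lemma~\ref{lemmasw}); and the genuinely hard part, the Fourier-analytic transfer from $\RR$ to $\RR\times[0,1]^n$, is the content of Theorem~\ref{maintheorem}. Consequently, were I to expand this plan into a full proof, the only thing to write would be the explicit citation of these two results and the remark that their hypotheses are met, so the write-up should be kept to a single short paragraph.
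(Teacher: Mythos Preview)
Your proposal is correct and matches the paper's own proof exactly: the paper states Theorem~\ref{theorembounded} as an immediate consequence of Lemma~\ref{relu} and Theorem~\ref{maintheorem}, which is precisely the chain you describe. There is nothing further to add.
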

Theorem \ref{theorembounded11} and  Theorem \ref{theorembounded} remain valid if $\RR \times [0, 1]^n$ is replaced by $\RR \times K$, where $K$ is any bounded set in $\RR^n$.

\section{Inexpressivity}
In this section, we prove that a shallow sigmoid, ReLU, ELU, Softplus, or LeakyReLU network expresses no nonzero function in $L^p(\RR\times\RR^{+}) (1\mathopen\leq\mathclose p\mathopen<\mathclose\infty)$.
\subsection{The inexpressivity of sums of ridge functions}
A real-valued function $F$ is called a ridge function, if there exist $\phi: \RR \mathopen\to\mathclose \RR$ and $y \mathopen\in\mathclose \RR^n$ such that $F \mathopen=\mathclose \phi^{L_{y}}$. By definition, if $F$ is a ridge function and $\alpha$ is a real number, then $\alpha F$ is also a ridge function. Because $\phi^{\tau_{\varrho} L_{y}}$ is a ridge function,  any function represented by a shallow neural network is a finite sum of ridge functions. The space $L^2(\RR^2)$ contains no nonzero ridge function \cite{ridge}[Proposition 1.1], which implies a nonzero ReLU unit is not integrable. To investigate the inexpressivity of neural networks, we need a stronger result on the finite sum of ridge functions.
\begin{proposition}\label{prop}
Let $n \mathopen\in\mathclose \NN, \Omega \mathopen=\mathclose \RR \mathopen\times\mathclose \RR^+$, $y_k \mathopen\in\mathclose \RR^2(1\mathopen\leq\mathclose k\mathopen\leq\mathclose n)$, $F_k(1\mathopen\leq\mathclose k\mathopen\leq\mathclose n)$ real valued functions  defined on $\RR$, and $F  \mathopen= \sum_{k=1}^n F_k^{L_{y_k}}$. Assume that
\begin{enumerate}
\item For $1\mathopen\leq\mathclose k\mathopen\leq\mathclose n$ and $\omega \mathopen\in\mathclose \{\mathopen\pm\mathclose\infty\}$, there exist $\beta_{k, \omega},\alpha_{k, \omega}$ such that $\lim\limits_{t \to \omega}(F_{k}(t) \mathopen-\mathclose \beta_{k, \omega}t) \mathopen=\mathclose \alpha_{k, \omega},$
\item $F|_{\Omega}$ is uniformly continuous,
\item $F|_{\Omega} \in L^p(\Omega)$ for some $1\mathopen\leq\mathclose p\mathopen<\mathclose\infty$.
\end{enumerate}
Then we have $F\mathopen=\mathclose0.$
\end{proposition}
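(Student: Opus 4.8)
The plan is to prove, by induction on the number of distinct ridge directions, the following slight reformulation of the proposition: if $F = c_0 + \sum_{k=1}^{n}\phi_k(\langle y_k,\cdot\rangle)$ with $c_0\in\RR$, with the $y_k\in\RR^2$ nonzero and pairwise non-parallel, with each $\phi_k$ satisfying the asymptotic linearity in hypothesis (1), and with $F|_{\Omega}$ uniformly continuous and in $L^p(\Omega)$ for some $p\in[1,\infty)$, then $F|_{\Omega}=0$. The general $F$ of the proposition is brought to this form by merging the ridge summands whose $y_k$ are parallel (precomposing an asymptotically affine function with $t\mapsto\lambda t$, $\lambda\neq0$, again gives an asymptotically affine function, so hypothesis (1) persists) and by absorbing the summands with $y_k=0$ into the constant $c_0$.

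The analytic core is one auxiliary fact: if $v\in\RR^2\setminus\{0\}$ has second coordinate $v_2\ge0$ and $g\colon\Omega\to\RR$ is uniformly continuous, lies in $L^p(\Omega)$, and satisfies $g(x+v)=g(x)$ for all $x\in\Omega$, then $g\equiv0$. When $v_2>0$ one has $g(x)=g(x+mv)$ for every $m\in\NN$, while $x+mv$ escapes to infinity inside $\Omega$ with $\mathrm{dist}(x+mv,\partial\Omega)=x_2+mv_2\to\infty$; a uniformly continuous $L^p(\Omega)$ function must tend to $0$ along any such sequence, since otherwise uniform continuity yields infinitely many pairwise disjoint balls of a fixed radius lying inside $\Omega$ on which $|g|$ is bounded away from $0$, contradicting integrability. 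When $v_2=0$, i.e. $v$ is horizontal, periodicity of each slice $g(\cdot,x_2)$ together with Fubini forces $\int_{\RR}|g(x_1,x_2)|^p\,dx_1=0$ for almost every $x_2$, hence again $g\equiv0$.

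For the induction, the base case $n=0$ is immediate: $F=c_0$ with $\lambda_2(\Omega)=\infty$ forces $c_0=0$. For the step, pick $v\neq0$ with $v\perp y_1$ and $v_2\ge0$ (a quarter-turn of $y_1$, with the sign chosen so that $v_2\ge0$; if $y_1$ is vertical, take $v=(1,0)$); since no $y_k$ with $k\ge2$ is parallel to $y_1$, we have $\langle y_k,v\rangle\neq0$ for $k\ge2$. Because $v_2\ge0$, translation by $v$ and by $2v$ maps $\Omega$ into $\Omega$, so $\tilde F(x):=F(x+2v)-2F(x+v)+F(x)$ is again uniformly continuous on $\Omega$ and lies in $L^p(\Omega)$ (each translate of $F$ has $L^p(\Omega)$ norm at most $\|F\|_{L^p(\Omega)}$). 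Expanding this second difference annihilates $c_0$ and the $k=1$ term (because $\langle y_1,v\rangle=0$) and leaves $\tilde F=\sum_{k=2}^{n}\psi_k(\langle y_k,\cdot\rangle)$ with $\psi_k=\Delta^2_{\langle y_k,v\rangle}\phi_k$; by hypothesis (1) each $\psi_k(t)\to0$ as $t\to\pm\infty$ because the affine pieces cancel under the second difference, so the $\psi_k$ satisfy (1) with vanishing constants. The inductive hypothesis, applied to $\tilde F$ (which has at most $n-1$ ridge directions and no constant term), gives $\tilde F\equiv0$ on $\Omega$. Now $h:=F(\cdot+v)-F(\cdot)$ satisfies $h(x+v)-h(x)=\tilde F(x)=0$ on $\Omega$ and is itself uniformly continuous and in $L^p(\Omega)$, so the auxiliary fact gives $h\equiv0$; that is, $F$ is $v$-periodic on $\Omega$, and a second application of the auxiliary fact to $F$ itself yields $F|_{\Omega}=0$.

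I expect the only genuine subtlety to be the bookkeeping of the domain: every differencing step must stay inside the half-plane, which is exactly why $v$ is taken with $v_2\ge0$ and why the auxiliary fact splits into the $v_2>0$ escape-to-infinity case and the $v_2=0$ Fubini case. The remaining checks are routine: that the second-difference operation preserves the full package of hypotheses (uniform continuity on $\Omega$, membership in $L^p(\Omega)$, and asymptotic linearity of the profiles), and the elementary merging of parallel ridge summands. The usual need to classify all solutions of a homogeneous second-difference equation is sidestepped by factoring $\tilde F$ through the first difference $h$ and invoking the auxiliary fact twice.
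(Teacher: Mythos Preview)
Your argument is correct and takes a genuinely different route from the paper's proof.

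The paper analyzes $F$ along rays $re^{i\theta}$ in $\Omega$: uniform continuity and $L^p$-integrability force $\lim_{r\to\infty}F(re^{i\theta})=0$ for every $\theta\in[0,\pi)$, and then the asymptotic linearity of the profiles is used to compare these limits at $\theta=\theta_j^{\perp}$ with those at nearby angles, yielding algebraic relations among the constants $\overline{\alpha_{k,\omega_k}}$ that force each individual component $F_k^{L_{e^{i\theta_k}}}|_\Omega$ to be affine. Your approach replaces this ray analysis with an induction on the number of directions: a second difference along $v\perp y_1$ kills one ridge direction, and after the inductive step the factorization $\tilde F=h(\cdot+v)-h(\cdot)$ together with your periodicity lemma (the ``auxiliary fact'') finishes the job without ever identifying the individual components.

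Each approach has its advantages. The paper's method yields the stronger intermediate conclusion that every $F_k$ is affine, which in particular gives $F=0$ on all of $\RR^2$ and not merely on $\Omega$; your argument, as you note, only yields $F|_\Omega=0$ (though this is all that is used downstream in Propositions~\ref{theorembounded111} and~\ref{theorembounded2}). On the other hand, your proof is shorter and more modular: the delicate case analysis near $\theta_j^{\perp}$ and the bookkeeping of signs of $\cos(\theta-\theta_k)$ are replaced by two clean invocations of the periodicity lemma, and the split into the $v_2>0$ (escape-to-infinity) and $v_2=0$ (Fubini) cases is exactly the right way to handle the half-plane geometry. The differencing idea also makes transparent why the hypothesis (1) is the natural one: second differences annihilate the asymptotic affine parts, so the reduced profiles $\psi_k$ automatically tend to zero at $\pm\infty$.
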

\begin{proof}
Throughout the proof, we let $re^{i\theta}$ denote the point $(r\cos{\theta}, r\sin{\theta})\in\RR^2$. If the proposition is not valid, then there exists a nonzero sum of ridge functions $F=\sum_{k=1}^nf_k^{L_{y_k}}$ such that all three assumptions in the proposition are satisfied. With the following relationship among ridge functions
\begin{align*}
G^{L_{\rho e^{i \theta}}} + H^{L_{s e^{i \theta}}} &=  (G^{L_{\rho}}+H^{L_{s}})^{L_{e^{i\theta}}}, \\
G^{L_{e^{i(\theta+\pi)}}}  &= (G^{L_{-1}})^{L_{e^{i\theta}}},
\end{align*}
we can reorganize $f_k$, via transforming those with $\theta>\pi$ and summing up those with the same $\theta$, so that the sum is in the form of
\begin{align}\label{F}
F =\sum\limits_{k=1}^n F_k^{L_{e^{i\theta_k}}},
\end{align}
where $\theta_k$ are different numbers in $[0, \pi)$. The first assumption of the proposition remains valid after the reorganization: for all $1\leq k \leq n$ and $\omega \in \{\pm\infty\}$, there exist real numbers $\beta_{k, \omega},\alpha_{k, \omega}$ such that 
\begin{align*}
\lim_{t \to \omega}(F_{k}(t) - \beta_{k, \omega}t) = \alpha_{k, \omega}.
\end{align*}
The last two assumptions of the proposition also remain valid after the reorganization. We claim:
\begin{align}\label{claim}
F_k^{L_{e^{i\theta_k}}}|_{\Omega} \textup{ are all linear functions.} 
\end{align}
If this is not true, then there exists $j \in \{1,\ldots, n\}$ such that $F_j^{L_{e^{i\theta_j}}}$ is not a linear function. By nonlinearity, there exists $o=(o_1, o_2) \in \Omega$ such that 
\begin{align}\label{aaaa}
F_j^{L_{e^{i\theta_j}}}(o) &\neq \alpha_{j, \infty}+ \beta_{j, \infty}(o_1\cos{\theta_j}+o_2\sin{\theta_j}). 
\end{align}
The function $\overline{F}: x \in \RR^2 \mapsto F(x+o)$ has the following properties:
\begin{enumerate}
\item[$P1$.] $\overline{F} \in L^p(\Omega)$ and $\overline{F}|_{\Omega}$ is uniformly continuous.
\item[$P2$.] There exist functions $\overline{F_k}$ such that $\overline{F} =\sum_{k=1}^n \overline{F_k}^{L_{e^{i\theta_k}}}$. For all $k$ and $\omega \in \{\pm\infty\}$, there are real numbers $\overline{\alpha_{k, \omega}}$ satisfying
\begin{align*}
\lim_{t \to \omega}(\overline{F_{k}}(t) - {\beta_{k, \omega}}t) = \overline{\alpha_{k, \omega}}.
\end{align*}
\item[$P3$.] $\overline{F_j}(0) \neq \overline{\alpha_{j, \infty}}$.
\end{enumerate}
\noindent The property $P1$ follows from $||\overline{F}||_{L^p(\Omega)} \leq ||F||_{L^p(\Omega)}$. The property $P2$ follows from
\begin{align*}
\overline{F}(x) & = \sum\limits_{k=1}^n F_k( (x_1 + o_1) \cos\theta_k+ (x_2 + o_2) \sin\theta_k) \\
 & = \sum\limits_{k=1}^n F_k( x_1\cos\theta_k+ x_2\sin\theta_k +(o_1\cos\theta_k+o_2\sin\theta_k)  ) \\
& = \sum_{k=1}^n \overline{F_k}^{L_{e^{i\theta_k}}}(x),
\end{align*}
where $\overline{F_k}(x) = F_k(x+o_1\cos{\theta_k}+o_2\sin{\theta_k})$, and from
\begin{align*}
&~~~~\lim\limits_{t \to \infty}(\overline{F_k}(t)-\beta_{k, \infty}t) \\
&= \lim\limits_{t \to \infty}(F_k(t+o_1\cos{\theta_k}+o_2\sin{\theta_k}) - \beta_{k, \infty}t) \\
&= \lim\limits_{t \to \infty}(F_k(t) - \beta_{k, \infty}t) + \beta_{k, \infty}(o_1\cos{\theta_k}+o_2\sin{\theta_k}) \\
& = \alpha_{k, \infty}+ \beta_{k, \infty}(o_1\cos{\theta_k}+o_2\sin{\theta_k}).
\end{align*}
The property $P3$ follows from (\ref{aaaa}) and the following facts:
\begin{align*}
\overline{\alpha_{j, \infty}}&=  \alpha_{j, \infty}+ \beta_{j, \infty}(o_1\cos{\theta_j}+o_2\sin{\theta_j}), \\
\overline{F_j}(0) &=F_j^{L_{e^{i\theta_j}}}(o). 
\end{align*}
For any $\theta \in [0,\pi)$, by the property $P2$, the end behavior of $\overline{F}(re^{i\theta})$ at $\infty$ is similar to a linear function in $r$. Therefore, $\lim\limits_{r \to \infty}\overline{F}(re^{i\theta})$ exists in $\RR \cup \{\pm\infty\}$. If $\lim\limits_{r \to \infty}\overline{F}(re^{i\theta}) \neq 0$, then
there exists $\rho > 0$ and $C>0$ such that if $r>\rho$ then $|\overline{F}(re^{i\theta})|\ge C.$ Because $\overline{F}$ is uniformly continuous, there exists $\delta >0$ such that if $|x-y|<\delta$ then $|\overline{F}(x)-\overline{F}(y)|<C/2.$ Let $\eta = \min{\{\rho (\pi-\theta), \delta\}}$. If $r > \rho$ and $\vartheta \in (\theta, \theta+\eta/r)$, then $re^{i\vartheta} \in \Omega$ and
\begin{align*}
|re^{i\vartheta}-re^{i\theta}| =  2r\sin{(\vartheta/2 - \theta/2)}< r(\vartheta - \theta) < \eta \leq \delta.
\end{align*}
Therefore, $|\overline{F}(re^{i\vartheta}) - \overline{F}(re^{i\theta})|<C/2$, and consequently
\begin{align*}
|\overline{F}(re^{i\vartheta})| &\ge |\overline{F}(re^{i\theta})|-|\overline{F}(re^{i\vartheta}) - \overline{F}(re^{i\theta})| \\
& \ge C-C/2 \ge C/2.
\end{align*}
By the above inequality, we have
\begin{align*}
\int_{\Omega} \left|\overline{F}(x_1, x_2)\right|^p \mathrm{d}x_1\mathrm{d}x_2 &\ge \int_{\rho}^{\infty} r \int_{\theta}^{\theta+\eta/r}\left|\overline{F}(re^{i\vartheta})\right|^p \mathrm{d}\vartheta\mathrm{d}r \\
&\ge \int_{\rho}^{\infty} \eta (C/2)^p \mathrm{d}r,
\end{align*}
which contradicts  $\overline{F} \in L^p(\Omega)$. Therefore, $\lim\limits_{r \to \infty}\overline{F}(re^{i\theta}) \neq 0$ is not true. Instead, for all $\theta \in [0, \pi)$,
\begin{align}\label{is0}
\lim\limits_{r \to \infty}\overline{F}(re^{i\theta})=0.
\end{align}
We notice
\begin{align*}
\overline{F_k}^{L_{e^{i\theta_k}}}(re^{i\theta}) &=\overline{F_k}(r\cos{\theta}\cos{\theta_k}+r\sin{\theta}\sin{\theta_k}) \\
&=\overline{F_k}(r\cos{(\theta-\theta_k)}).
\end{align*}
This identity together with $\overline{F} =\sum_{k=1}^n \overline{F_k}^{L_{e^{i\theta_k}}}$ and (\ref{is0}) implies for all $\theta \in [0, \pi)$, 
\begin{align}\label{is01}
\lim\limits_{r \to \infty}\sum_{k=1}^n \overline{F_k}(r\cos{(\theta-\theta_k)})=0.
\end{align}
For $\theta \in [0, \pi)$, let $\theta^{\perp}$ denote the unique number in $[0, \pi)$ such that $\cos{(\theta - \theta^{\perp})} =0$,  equivalently $\theta^{\perp}=\{ \theta \pm \pi/2 \} \cap [0, \pi)$. There exists a small $\omega_{\theta_j} >0$ such that $(\theta_j^{\perp}, \theta_j^{\perp}+\omega_{\theta_j}) \subset (0, \pi)$ and $\{\theta_k^{\perp} | 1 \leq k \leq n\} \cap (\theta_j^{\perp}, \theta_j^{\perp}+\omega_{\theta_j}) = \emptyset$. For every $\theta \in (\theta_j^{\perp}, \theta_j^{\perp}+\omega_{\theta_j})$ and $k \in \{1, \ldots, n\}$, we have
\begin{align}\label{singularinfinity}
\cos(\theta-\theta_k) \neq 0. 
\end{align}
Otherwise $\theta = \{\theta_k \pm \pi/2 \} \cap [0, \pi) = \theta_k^{\perp}$, which contradicts $\theta_k^{\perp} \notin  (\theta_j^{\perp}, \theta_j^{\perp}+\omega_{\theta_j}).$ If $\cos(\theta-\theta_k) > 0$, then by the property $P2$, we have
\begin{align*}
\lim\limits_{r \to \infty} (\overline{F_k}(r\cos{(\theta-\theta_k)})-\beta_{k, \infty}\cos{(\theta-\theta_k)}r) = \overline{\alpha_{k, \infty}}.
\end{align*} 
If $\cos(\theta-\theta_k) < 0$, then by the property $P2$, we have
\begin{align*}
\lim\limits_{r \to \infty} (\overline{F_k}(r\cos{(\theta-\theta_k)})-\beta_{k, -\infty}\cos{(\theta-\theta_k)}r) = \overline{\alpha_{k, -\infty}}.
\end{align*} 
If $\cos(\theta-\theta_k) = 0$, then
\begin{align*}
\lim\limits_{r \to \infty} \overline{F_k}(r\cos{(\theta-\theta_k)}) = \overline{F_k}(0).
\end{align*} 
Let $\theta$ be a number in $(\theta_j^{\perp}, \theta_j^{\perp}+\omega_{\theta_j})$. By (\ref{singularinfinity}), for all $k$, there exists $\omega_k \in \{\pm\infty\}$ such that
\begin{align}\label{06081}
\lim\limits_{r \to \infty} (\overline{F_k}(r\cos{(\theta-\theta_k)})-\beta_{k, \omega_k}\cos{(\theta-\theta_k)}r) = \overline{\alpha_{k, \omega_k}}.
\end{align}
This identity together with (\ref{is01}) leads to $\sum_{k=1}^n\beta_{k, \omega_k}\cos{(\theta-\theta_k)} = 0$ and
\begin{align}\label{sorry1}
\overline{\alpha_{j, \omega_j}} = -\sum_{k\neq j}\overline{\alpha_{k, \omega_k}}.
\end{align}
At $\theta=\theta_j^{\perp}$, for $k \neq j$, we have $\cos{(\theta_j^{\perp}-\theta_k)} \neq 0$ and therefore
\begin{align}\label{06082}
\lim\limits_{r \to \infty} (\overline{F_k}(r\cos{(\theta_j^{\perp}-\theta_k)})-\beta_{k, \omega_k}\cos{(\theta_j^{\perp}-\theta_k)}r) &= \overline{\alpha_{k, \omega_k}}.
\end{align}
Here $\omega_k$ in (\ref{06082}) is the same as the $\omega_k$ in (\ref{06081}), as $\cos{(\theta_j^{\perp}-\theta_k)}$ is of the same sign as $\cos{(\theta-\theta_k)}$, where $\theta\in(\theta_j^{\perp}, \theta_j^{\perp}+\omega_{\theta_j})$.
For the particular $\overline{F_j}$, we have
\begin{align*}
\lim\limits_{r \to \infty} \overline{F_j}(r\cos{(\theta_j^{\perp}-\theta_j)}) &=  \overline{F_j}(0).
\end{align*}
The above two identities together with (\ref{is01}) lead to $\sum_{k\neq j}\beta_{k, \omega_k}\cos{(\theta_j^{\perp}-\theta_k)} =0$ and
\begin{align}\label{sorry2}
\overline{F_j}(0) = -\sum_{k\neq j}\overline{\alpha_{k, \omega_k}}.
\end{align}
By (\ref{sorry1}) and (\ref{sorry2}), we have $\overline{F_j}(0) = \overline{\alpha_{j, \infty}}$, which contradicts the property $P3$. Therefore, we have proved (\ref{claim}), which claims that $F_k^{L_{e^{i\theta_k}}}|_{\Omega}$ are all linear functions. So is $F|_{\Omega}$. As a linear function in $L^p(\Omega)$, $F$ must be zero.
\end{proof}

The domain $\RR \times \RR^+$ is optimal for Proposition \ref{prop}. For example,
\begin{example}
Let $c$ be any positive number, and $\Omega=\{(t, x) \in \RR^2: |x|<ct \}.$ The sum of ridge functions, $\textup{ReLU}(t+2)-2\textup{ReLU}(t+1)+\textup{ReLU}(t)$, lies in $L^1(\Omega)$.
\end{example}

\subsection{The inexpressivity of neural networks}
By using Proposition \ref{prop}, we can prove
\begin{proposition}\label{theorembounded111}
Let $\Omega \mathopen=\mathclose \RR \mathopen\times\mathclose \RR^+$, and $\phi$ be a real-valued, measurable, essentially bounded activation function with $\lim\limits_{x \to \infty} \phi(x) \ a.e.$ and $\lim\limits_{x \to -\infty} \phi(x) \ a.e.$ existing. For  $1\mathopen\leq\mathclose p\mathopen<\mathclose\infty$,
\begin{align*}
\mathcal{S}_{2}(\phi) \cap L^{p}(\Omega)= \{0\}.
\end{align*}
\end{proposition}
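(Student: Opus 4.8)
The plan is to deduce the statement from Proposition \ref{prop} by a mollification argument. Of the three hypotheses of Proposition \ref{prop}, asymptotic linearity (with all slopes zero) and $L^{p}$-membership will come essentially for free, but uniform continuity is false in general since $\phi$ is only measurable, and disposing of this is the crux.

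Let $F\in\mathcal{S}_{2}(\phi)\cap L^{p}(\Omega)$, say $F=\sum_{k=1}^{n}t_{k}\phi^{\tau_{\varrho_{k}}L_{y_{k}}}$, regarded as an essentially bounded measurable function on all of $\RR^{2}$; write $c_{+}$, $c_{-}$ for the a.e.\ limits of $\phi$ at $+\infty$, $-\infty$. For $\delta>0$ fix a nonnegative $\psi_{\delta}\in C^{\infty}_{c}(\RR^{2})$ with $\int\psi_{\delta}=1$ and $\supp\psi_{\delta}$ inside the ball of radius $\delta/2$, and form $F\ast\psi_{\delta}$ on $\RR^{2}$. For each index with $y_{k}\neq0$, decomposing $z$ along $\RR y_{k}$ and its orthogonal complement gives the identity
\begin{align*}
(\phi^{\tau_{\varrho_{k}}L_{y_{k}}}\ast\psi_{\delta})(x)=g_{k,\delta}^{L_{y_{k}}}(x),\qquad g_{k,\delta}(s)=\int_{\RR}\phi(s+\varrho_{k}-u)\,\Psi_{k,\delta}(u)\,du,
\end{align*}
where $\Psi_{k,\delta}\in C^{\infty}_{c}(\RR)$, $\Psi_{k,\delta}\ge0$, $\int\Psi_{k,\delta}=1$, is the density of the pushforward of $\psi_{\delta}(z)\,dz$ under $z\mapsto\langle y_{k},z\rangle$; indices with $y_{k}=0$ contribute constants, unchanged by convolution. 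Thus $F\ast\psi_{\delta}$ is again a finite sum of ridge functions whose profiles (the various $g_{k,\delta}$, plus possibly a constant) are all $C^{\infty}$, bounded by $||\phi||_{L^{\infty}}$, and tend to $c_{+}$, $c_{-}$ at $\pm\infty$ by dominated convergence, using the a.e.\ limits of $\phi$ and the compact support of $\Psi_{k,\delta}$. Since a continuous function on $\RR$ with finite limits at $\pm\infty$ is uniformly continuous, every profile is uniformly continuous, hence so is $F\ast\psi_{\delta}$ on $\RR^{2}$; and each profile satisfies assumption $1$ of Proposition \ref{prop} with slope $0$.

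To recover the $L^{p}$ bound, restrict to $\{x_{1}>\delta\}$: there $(F\ast\psi_{\delta})(x)=\int_{|z|\le\delta/2}F(x-z)\psi_{\delta}(z)\,dz$ samples $F$ only on $\RR\times(\delta/2,\infty)\subset\Omega$, so Jensen's inequality yields $||F\ast\psi_{\delta}||_{L^{p}(\RR\times(\delta,\infty))}\le||F||_{L^{p}(\Omega)}<\infty$. The downward translate $G_{\delta}(x_{0},x_{1}):=(F\ast\psi_{\delta})(x_{0},x_{1}+\delta)$ is a finite sum of ridge functions in the same directions, whose profiles are translates of the previous ones (still $C^{\infty}$, bounded, with limits $c_{\pm}$), is uniformly continuous on $\Omega$, and lies in $L^{p}(\Omega)$; so Proposition \ref{prop} applies and gives $G_{\delta}=0$, i.e.\ $F\ast\psi_{\delta}\equiv0$ on $\RR\times(\delta,\infty)$, for every $\delta>0$. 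Finally, $F\in L^{p}_{\mathrm{loc}}(\Omega)$ implies $F\ast\psi_{\delta}\to F$ in $L^{p}_{\mathrm{loc}}(\Omega)$ as $\delta\to0$; given any compact $K\subset\Omega$, choosing $a>0$ with $K\subset\RR\times[a,\infty)$, for $\delta<a$ we have $F\ast\psi_{\delta}\equiv0$ on $K$, hence $F=0$ a.e.\ on $K$, and letting $K$ exhaust $\Omega$ gives $F=0$ in $L^{p}(\Omega)$. The one real obstacle is the lack of continuity of $\phi$; mollification removes it while preserving the ridge-sum structure and the asymptotics, and the detour through $\RR\times(\delta,\infty)$ (then back to $\Omega$ by translation) is forced because $F$ is unconstrained on the lower half-plane, so $F\ast\psi_{\delta}$ itself need not be in $L^{p}(\Omega)$.
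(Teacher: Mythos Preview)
Your proof is correct and follows essentially the same mollification strategy as the paper: convolve to obtain a uniformly continuous ridge sum with the right asymptotics, apply Proposition~\ref{prop}, and pass to the limit. The only difference is cosmetic: the paper supports the mollifier in $\RR\times\RR^{-}$ so that $\Phi*\rho_{\epsilon}|_{\Omega}$ already coincides with $(\Phi I_{\Omega})*\rho_{\epsilon}|_{\Omega}\in L^{p}(\Omega)$, whereas you use a centered mollifier and compensate with the downward translation $G_{\delta}$ --- both devices achieve the same end.
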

\begin{proof} 
Suppose there exists a positive integer $n$, real numbers $\{t_i\}_{i=1}^n, \{\varrho_i\}_{i=1}^n$ and nonzero vectors  $\{y_i \in \RR^2\}_{i=1}^n$ such that the function 
\begin{align*}
\Phi =t_0+ \sum_{i=1}^n t_i\phi^{\tau_{\varrho_i}L_{y_i}}
\end{align*}
lies in $L^p(\Omega)$.  Take a smooth function $\rho$ that satisfies $\int_{\RR^2}\rho(x)\mathrm{d}x = 1$ and $\supp{\rho} \subset \{x \in \RR \times \RR^- : |x|<1 \}.$ For $\epsilon>0$, define $\rho_{\epsilon}(x) = \epsilon^{-2}\rho(x/\epsilon)$. For all $z = (z_1, z_2) \in \Omega$, 
\begin{align}\label{singleconv}
\phi^{\tau_{\varrho_i}L_{y_i}} * \rho_{\epsilon} (z) &= \int_{\RR^2} \phi^{\tau_{\varrho_i}L_{y_i}}(z-x) \rho_{\epsilon}(x)\mathrm{d}x_1 \mathrm{d}x_2 \nonumber \\
 &= \int_{\RR^2} \phi(\langle y_i, z \rangle -\langle y_i, x\rangle + \varrho_i ) \rho_{\epsilon}(x)\mathrm{d}x_1  \mathrm{d}x_2 \nonumber \\
 &= \phi_{\epsilon, i}^{L_{y_i}}(z),
\end{align}
where $\phi_{\epsilon, i}$ are given by
\begin{align*}
\phi_{\epsilon, i}(s) =  \int_{\RR^2} \phi(s -\langle y_i, x\rangle + \varrho_i ) \rho_{\epsilon}(x)\mathrm{d}x_1  \mathrm{d}x_2.
\end{align*}
Summing up (\ref{singleconv}) for all $i$, we have
\begin{align}\label{globalconv}
\Phi * \rho_{\epsilon} = t_0 + \sum_{i=1}^n t_i\phi_{\epsilon, i}^{L_{y_i}}.
\end{align}
We claim $\phi_{\epsilon, i}$ are continuous: if $\lim\limits_{j\to \infty}s_j =s$ then $\lim\limits_{j\to \infty}\phi_{\epsilon, i}(s_j) =\phi_{\epsilon, i}(s).$ Since $|y_i| \neq 0$, we can let 
\begin{align*}
z_j = \frac{s-s_j}{|y_i|^2}y_i.
\end{align*}
Then $\langle y_i, z_j \rangle =s-s_j$ and $\lim\limits_{j\to \infty}z_j =(0,0).$ Using $\phi$ is essentially bounded, $\rho_{\epsilon}$ is bounded and has bounded support, and the dominated convergence theorem, we have
\begin{align*}
\lim_{j\to \infty}\phi_{\epsilon, i}(s_j) &= \lim_{j\to \infty}\int_{\RR^2} \phi(s_j -\langle y_i, x\rangle + \varrho_i ) \rho_{\epsilon}(x)\mathrm{d}x \\
&=\lim_{j\to \infty}\int_{\RR^2} \phi(s -\langle y_i, x+z_j\rangle + \varrho_i ) \rho_{\epsilon}(x)\mathrm{d}x \\
&=\lim_{j\to \infty}\int_{\RR^2} \phi(s -\langle y_i, x\rangle + \varrho_i ) \rho_{\epsilon}(x-z_j)\mathrm{d}x \\
&=\int_{\RR^2} \phi(s -\langle y_i, x\rangle + \varrho_i ) \rho_{\epsilon}(x)\mathrm{d}x \\
&=\phi_{\epsilon, i}(s).
\end{align*}
This calculation verifies our claim that $\phi_{\epsilon, i}$ are continuous. Moreover, it is trivial to check that $\lim_{s \to \pm \infty}\phi_{\epsilon, i}(s) = \lim_{s \to \pm \infty} \phi(s) \ a.e.$. As a continuous function with bounded limits at the infinities, $\phi_{\epsilon, i}$ must be uniformly continuous. As a composition of $\phi_{\epsilon, i}$ with a linear function (which is uniformly continuous), $\phi_{\epsilon, i}^{L_{y_i}}$ is also uniformly continuous. Consequently, $\Phi * \rho_{\epsilon}$ is uniformly continuous.  In particular, $\Phi * \rho_{\epsilon}|_{\Omega}$ (\ref{globalconv}) is uniformly continuous. 

Next, we claim $\Phi * \rho_{\epsilon}|_{\Omega} \in L^p(\Omega)$. By assumption, we only have  $\Phi|_{\Omega} \in L^p(\Omega)$ but not $\Phi \in L^p(\RR^2)$, so the argument will be slightly lengthier than an expected one. Let $\overline{\Phi} = \Phi I_{\Omega}$, then $\overline{\Phi} \in L^p(\RR^2)$.   By the facts that $\rho_{\epsilon}|_{\Omega}=0$, and that  if $z \in \RR\times\RR^+, x \in \RR \times \RR^-$ then $z-x \in \RR\times\RR^+$, for $z \in \RR\times\RR^+$, we have
\begin{align*}
\overline{\Phi} * \rho_{\epsilon} (z) &= \int_{\RR \times \RR} \overline{\Phi}(z-x) \rho_{\epsilon}(x)\mathrm{d}x_1 \mathrm{d}x_2 \\
&= \int_{\RR \times \RR^-} \overline{\Phi}(z-x) \rho_{\epsilon}(x)\mathrm{d}x_1 \mathrm{d}x_2 \ \ \ \ (\rho_{\epsilon}|_{\RR\times\RR^+}=0) \\
&= \int_{\RR \times \RR^-} \Phi(z-x) \rho_{\epsilon}(x)\mathrm{d}x_1 \mathrm{d}x_2 \ \ \ \ (z-x \in \RR\times\RR^+)\\
&= \int_{\RR \times \RR} \Phi(z-x) \rho_{\epsilon}(x)\mathrm{d}x_1 \mathrm{d}x_2  \ \ \ \ (\rho_{\epsilon}|_{\RR\times\RR^+}=0)\\
&= \Phi * \rho_{\epsilon} (z).
\end{align*}
Therefore, $\overline{\Phi} * \rho_{\epsilon}|_{\Omega} = \Phi * \rho_{\epsilon}|_{\Omega}$.
By \cite{measure}[Theorem 3.9.4], $\overline{\Phi} * \rho_{\epsilon} \in L^p(\RR^2)$, hence $\overline{\Phi} * \rho_{\epsilon}|_{\Omega}$ and $\Phi * \rho_{\epsilon}|_{\Omega}$ (\ref{globalconv}) are $L^p$-integrable. 

The first assumption of  Proposition 4.1 is also satisfied by $\Phi * \rho_{\epsilon}|_{\Omega}$ (\ref{globalconv}), as $\phi_{\epsilon, i}$ have finite limits at $\{\pm \infty\}$. We can now apply Proposition 4.1 to (\ref{globalconv}) and obtain $ \Phi * \rho_{\epsilon}|_{\Omega} =0$. 

This identity leads to $\overline{\Phi} * \rho_{\epsilon}|_{\Omega}  =0$. As $\epsilon$ is arbitrary chosen this is true for all $\epsilon>0$. By \cite{measure}[Theorem 4.24] and the fact that $\overline{\Phi} \in L^p(\RR^2)$,  $\lim\limits_{\epsilon \to 0} \overline{\Phi} * \rho_{\epsilon} = \overline{\Phi}$ in $L^p(\RR^2)$ and in particular $\lim\limits_{\epsilon \to 0} \overline{\Phi} * \rho_{\epsilon}|_{\Omega} = \overline{\Phi}|_{\Omega}$ in $L^p(\Omega)$. This identity together with $\overline{\Phi} * \rho_{\epsilon}|_{\Omega}  =0$ leads to $\overline{\Phi}|_{\Omega} = 0$. By the definition of $\overline{\Phi}$,  $\overline{\Phi}|_{\Omega}  = \Phi|_{\Omega} $. Therefore, $\Phi|_{\Omega} = 0$.
\end{proof}

We prove the following simple lemma
\begin{lemma}
If  $\phi$ is continuous and eventually uniformly continuous, then $\phi$ is uniformly continuous.
\end{lemma}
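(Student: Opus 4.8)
The plan is to glue together the uniform continuity on the two unbounded tails with the uniform continuity on a compact core interval. First I would invoke the hypothesis of eventual uniform continuity to fix $X>0$ with $\phi|_{(X,\infty)}$ and $\phi|_{(-\infty,-X)}$ uniformly continuous; since uniform continuity on $(X_0,\infty)$ is inherited by $(X,\infty)$ for every $X\ge X_0$, I may assume $X\ge 1$. Since $\phi$ is continuous on $\RR$, it is continuous, hence uniformly continuous, on the compact interval $J=[-X-1,X+1]$.

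Next, given $\epsilon>0$, I would extract moduli of continuity: $\delta_1>0$ from uniform continuity on $(X,\infty)$, $\delta_2>0$ from uniform continuity on $(-\infty,-X)$, and $\delta_3>0$ from uniform continuity on $J$, all for tolerance $\epsilon$, and set $\delta=\min\{\delta_1,\delta_2,\delta_3,1\}$. The claim is that this $\delta$ witnesses uniform continuity of $\phi$ on all of $\RR$. Given $x,y$ with $|x-y|<\delta$, write $m=\min\{x,y\}$ and $M=\max\{x,y\}$ and split into three cases: (i) if $m>X$, then $x,y\in(X,\infty)$ and $\delta_1$ gives $|\phi(x)-\phi(y)|<\epsilon$; (ii) if $M<-X$, then $x,y\in(-\infty,-X)$ and $\delta_2$ applies; (iii) otherwise $m\le X$ and $M\ge -X$, whence $M\le m+1\le X+1$ and $m\ge M-1\ge -X-1$, so $x,y\in J$ and $\delta_3$ applies. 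In every case $|\phi(x)-\phi(y)|<\epsilon$, which is exactly uniform continuity.

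The argument is essentially routine; the only point requiring a little care is case (iii), namely controlling the pairs $x,y$ that straddle the interface between a tail and the compact core. Restricting attention to $|x-y|<1$ (the reason for the extra ``$1$'' in the definition of $\delta$) together with working with the slightly enlarged interval $[-X-1,X+1]$ rather than $[-X,X]$ guarantees that any such straddling pair lies entirely inside $J$, so no separate boundary estimate is needed. One could alternatively first observe that uniform continuity on $(X,\infty)$ plus continuity at $X$ upgrades to uniform continuity on $[X,\infty)$, but the enlarged-interval device sidesteps even that.
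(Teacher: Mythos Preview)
Your proof is correct and follows essentially the same approach as the paper: both cover $\RR$ by the two tails together with an enlarged compact core $[-X-1,X+1]$, the extra width of $1$ absorbing any pair of nearby points that straddles a boundary. The only cosmetic difference is that the paper phrases the final step as a contradiction via sequences (pigeonholing each bad pair into one of the three intervals), whereas you give the direct $\epsilon$--$\delta$ version; the underlying idea is identical.
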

\begin{proof}
There exists  a positive number $X$ such that on intervels $J_1=(-\infty, -X], J_2=[-X-1, X+1], J_3=[X, \infty)$,  $\phi|_{J_j}(j = 1, 2, 3)$ are all uniformly continuous. Suppose the lemma is not true. There exists some positive $\epsilon$ and a pair of real numbers $\{x_i, y_i\}$, for all $i\in\NN$, such that $|x_i-y_i|<1/i$ and $|f(x_i)-f(y_i)|>\epsilon$. Each pair $\{x_i, y_i\}$ is contained in at least one of $J_j$, hence there exists $J \in \{J_1, J_2, J_3\}$ and a subsequence $\{i_k\}_{k \in \NN}$ of $\{i\}_{i \in \NN}$ such that $\{x_{i_k}, y_{i_k}\} \subset J$ for all $k$. This contradicts the fact that $\phi$ is uniformly continuous on $J$. 
\end{proof}

As being  eventually Lipschitz implies being eventually uniformly continuous, the above lemma implies
\begin{lemma}\label{Lipschitz}
If $\phi$ is continuous and eventually Lipschitz, then it is uniformly continuous.
\end{lemma}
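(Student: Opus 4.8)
The plan is to reduce the statement to the previous lemma (``if $\phi$ is continuous and eventually uniformly continuous, then $\phi$ is uniformly continuous''), whose hypothesis I will verify. So the only thing to check is that \emph{eventually Lipschitz} implies \emph{eventually uniformly continuous}.

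First I would unpack the definition of eventually Lipschitz from the preliminaries: there is an $X>0$ such that $\phi|_{(X,\infty)}$ and $\phi|_{(-\infty,-X)}$ are each Lipschitz, say with Lipschitz constants $L_+$ and $L_-$. On each of these two half-lines, a Lipschitz function is trivially uniformly continuous: given $\epsilon>0$, take $\delta = \epsilon/\max\{L_+,L_-,1\}$, and then $|x-y|<\delta$ (with $x,y$ in the same half-line) forces $|\phi(x)-\phi(y)| \le L_\pm |x-y| < \epsilon$. Hence $\phi|_{(X,\infty)}$ and $\phi|_{(-\infty,-X)}$ are uniformly continuous, which is precisely the statement that $\phi$ is eventually uniformly continuous (with the same threshold $X$).

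Then I would apply the previous lemma directly: $\phi$ is continuous by hypothesis and eventually uniformly continuous by the paragraph above, so $\phi$ is uniformly continuous, which is the claim.

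There is no real obstacle here; the content is entirely bookkeeping with the ``eventually $\mathcal{P}$'' terminology, and the substantive work (patching the two tails to a compact middle interval) has already been done in the preceding lemma. The only minor point to be careful about is to make sure the thresholds match up — i.e. that the $X$ witnessing ``eventually Lipschitz'' can be reused verbatim as the $X$ witnessing ``eventually uniformly continuous'' — which it can, since Lipschitz on an interval implies uniformly continuous on that same interval.
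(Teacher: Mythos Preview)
Your proposal is correct and follows exactly the paper's approach: observe that eventually Lipschitz implies eventually uniformly continuous (since Lipschitz on an interval implies uniformly continuous on that interval), then invoke the preceding lemma. The paper states this in one line without spelling out the $\delta=\epsilon/L$ detail, but the argument is identical.
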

With this lemma, we prove
\begin{proposition}\label{theorembounded2}
Let $\Omega \mathopen=\mathclose \RR \mathopen\times\mathclose \RR^+$, and $\phi$ be \textup{ReLU}, \textup{ELU},\textup{Softplus}, or \textup{LeakyReLU}. For $1\mathopen\leq\mathclose p\mathopen<\mathclose\infty$,
\begin{align*}
\mathcal{S}_{2}(\phi) \cap L^{p}(\Omega)=\{0\}.
\end{align*}
\end{proposition}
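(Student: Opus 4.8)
The plan is to derive this as a short corollary of Proposition \ref{prop}, along the same lines as the proof of Proposition \ref{theorembounded111}, but skipping the mollification step since ReLU, ELU, Softplus and LeakyReLU are already continuous. Suppose $\Phi = \sum_{i=1}^n t_i\phi^{\tau_{\varrho_i} L_{y_i}} \in \mathcal{S}_2(\phi)$ has $\Phi|_\Omega \in L^p(\Omega)$. For each $i$ I would set $F_i(s) := t_i\phi(s+\varrho_i)$, a function on all of $\RR$, so that $\Phi = \sum_{i=1}^n F_i^{L_{y_i}}$; terms with $y_i = 0$ simply contribute the constant $F_i(0)$ and cause no trouble. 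It then suffices to verify the three hypotheses of Proposition \ref{prop} for this decomposition, whereupon Proposition \ref{prop} forces $\Phi|_\Omega = 0$.

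Hypothesis (1): I would observe that each of the four activations is asymptotically affine, with $\lim_{t\to+\infty}(\phi(t)-t)=0$ and $\lim_{t\to-\infty}(\phi(t)-\beta t)=\alpha$, where $(\beta,\alpha)=(0,0)$ for ReLU and Softplus, $(0,-\alpha)$ for ELU, and $(\alpha,0)$ for LeakyReLU. Replacing $t$ by $s+\varrho_i$ and multiplying by $t_i$ yields constants $\beta_{i,\omega},\alpha_{i,\omega}$ with $\lim_{s\to\omega}(F_i(s)-\beta_{i,\omega}s)=\alpha_{i,\omega}$ for $\omega\in\{\pm\infty\}$. Hypothesis (2): each of ReLU, ELU, Softplus, LeakyReLU is continuous and globally Lipschitz, hence uniformly continuous on $\RR$ by Lemma \ref{Lipschitz}; precomposing with the Lipschitz affine map $x\mapsto\langle y_i,x\rangle+\varrho_i$ makes $\phi^{\tau_{\varrho_i} L_{y_i}}$ uniformly continuous on $\RR^2$, so the finite sum $\Phi$, and in particular $\Phi|_\Omega$, is uniformly continuous. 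Hypothesis (3) is the standing assumption $\Phi|_\Omega\in L^p(\Omega)$. Applying Proposition \ref{prop} gives $\Phi=0$, hence $\mathcal{S}_2(\phi)\cap L^p(\Omega)=\{0\}$.

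I do not expect a genuine obstacle: the real work is already contained in Proposition \ref{prop} and Lemma \ref{Lipschitz}. The only points needing a little care are checking that the asymptotic-linearity constants of each activation function survive the translation by $\varrho_i$ and the scaling by $t_i$ (so that the summands $F_i^{L_{y_i}}$ are admissible inputs to Proposition \ref{prop}), and recording that, in contrast to Proposition \ref{theorembounded111} where $\phi$ was merely essentially bounded and mollification was needed, here no mollification is required, because continuity together with the eventual (indeed global) Lipschitz property already delivers, via Lemma \ref{Lipschitz}, the uniform continuity demanded by hypothesis (2).
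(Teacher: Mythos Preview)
Your proposal is correct and follows essentially the same approach as the paper's own proof: write $\Phi$ as a sum of ridge functions, verify the asymptotic-affine limits for each of the four activations, use continuity plus (eventual) Lipschitzness together with Lemma~\ref{Lipschitz} to get uniform continuity, and then invoke Proposition~\ref{prop}. The only cosmetic differences are that you note the global (rather than merely eventual) Lipschitz property and explicitly dispose of the $y_i=0$ case, neither of which changes the argument.
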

\begin{proof} 
Take $F \mathopen\in\mathclose \mathcal{S}_{2}(\phi)$. There exist a positive integer $k$,  real numbers $t_i,\varrho_i$ and vectors  $y_i \mathopen\in\mathclose \RR^2$ such that  $F=\sum_{i=1}^{k}(t_i\phi^{\tau_{\varrho_i}})^{L_{y_i}}.$ Because
\begin{align*}
\lim_{x\to \infty}(\textup{ReLU}(x)\mathopen-\mathclose x) &\mathopen=\mathclose 0, & \lim_{x\to \infty}(\textup{ELU}(x)\mathopen-\mathclose x) &\mathopen=\mathclose 0,   \\
\lim_{x\to -\infty}\textup{ReLU}(x) &\mathopen=\mathclose 0, & \lim_{x\to -\infty}\textup{ELU}(x) &\mathopen=\mathclose -\alpha, \\
 \lim_{x\to \infty}(\textup{Softplus}(x)\mathopen-\mathclose x) &\mathopen=\mathclose 0, & \lim_{x\to \infty}(\textup{LeakyReLU}(x)\mathopen-\mathclose x) &\mathopen=\mathclose 0, \\
 \lim_{x\to -\infty}\textup{Softplus}(x) &\mathopen=\mathclose 0, & \lim_{x\to -\infty}(\textup{LeakyReLU}(x)\mathopen-\mathclose \alpha x) &\mathopen=\mathclose 0,  
\end{align*}
the first assumption of Proposition \ref{prop} is satisfied for $F$. By checking derivatives around $\pm \infty$, ReLU, ELU, Softplus, and LeakyReLU are all eventually Lipschitz. By Lemma \ref{Lipschitz}, $\phi$ is uniformly continuous. As compositions of uniformly continuous functions,  $(t_i\phi^{\tau_{\varrho_i}})^{L_{y_i}}(x)=t_i\phi(\langle y_i, x\rangle+\varrho_i)$ are also uniformly continuous. Therefore $F$ is uniformly continuous, which satisfies the second assumption of Proposition \ref{prop}. For any $F\mathopen\in\mathclose L^{p}(\Omega)$, we can apply Proposition \ref{prop} and conclude that $F\mathopen=\mathclose 0$.
\end{proof}

Proposition \ref{theorembounded111} and Proposition \ref{theorembounded2} prove the inexpressivity of neural networks in $L^{p}(\RR\times\RR^+)$. Because $\RR\times\RR^+$ is a submanifold of many unbounded domains, by the Fubini theorem, our results lead to the inexpressivity of neural networks on many other spaces. For example, we have
\begin{corollary}\label{corollarydepth}
Let $n\mathopen\ge\mathclose 1$ and $\phi$ be a sigmoid, \textup{ReLU}, \textup{ELU},\textup{Softplus}, or \textup{LeakyReLU}. For $1\mathopen\leq\mathclose p\mathopen<\mathclose\infty$,
\begin{align*}
\mathcal{S}_{n+2}(\phi) \cap L^{p}(\RR^2 \times [0,1]^n)=\{0\}, \\
\mathcal{S}_{n+1}(\phi) \cap L^{p}(\RR^{n+1})=\{0\}, \\
\mathcal{S}_{n+1}(\phi) \cap L^{p}(\{(x_0, \ldots, x_n)| x_1^2\mathopen+\cdots+\mathclose x_n^2\mathopen>\mathclose x_0\})=\{0\}.
\end{align*}
\end{corollary}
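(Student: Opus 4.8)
The plan is to reduce all three identities of Corollary \ref{corollarydepth} to the two-dimensional inexpressivity already in hand, namely $\mathcal{S}_{2}(\phi)\cap L^{p}(\RR\times\RR^{+})=\{0\}$, which is Proposition \ref{theorembounded111} when $\phi$ is a sigmoid and Proposition \ref{theorembounded2} when $\phi$ is ReLU, ELU, Softplus, or LeakyReLU. I would first isolate two routine facts. (i) $\mathcal{S}_{N}(\phi)$ is stable under affine reparametrization: if $T\colon\RR^{2}\to\RR^{N}$ is affine and $F\in\mathcal{S}_{N}(\phi)$, then $F\circ T\in\mathcal{S}_{2}(\phi)$, because $\phi(\langle y,T(s,u)\rangle+\varrho)$ depends on $(s,u)$ only through an affine functional and is therefore again a (possibly constant) neuron in $(s,u)$; additive constant terms are harmless since the proofs of Propositions \ref{theorembounded111} and \ref{theorembounded2} tolerate them. (ii) $\RR\times\RR^{-}$ is affinely isomorphic to $\RR\times\RR^{+}$, so $\mathcal{S}_{2}(\phi)\cap L^{p}(\RR\times\RR^{-})=\{0\}$ as well; splitting $\RR^{2}$ along the axis $x_{1}=0$ then yields what I will call the key lemma, $\mathcal{S}_{2}(\phi)\cap L^{p}(\RR^{2})=\{0\}$.

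The first two identities would follow from the key lemma by Tonelli's theorem. If $F\in\mathcal{S}_{n+2}(\phi)$ with $F|_{\RR^{2}\times[0,1]^{n}}\in L^{p}$, then for a.e. $c\in[0,1]^{n}$ the slice $(x_{0},x_{1})\mapsto F(x_{0},x_{1},c)$ lies in $\mathcal{S}_{2}(\phi)$ by fact (i) and in $L^{p}(\RR^{2})$ by Tonelli, hence vanishes a.e. by the key lemma; a second application of Tonelli gives $F=0$ a.e. on $\RR^{2}\times[0,1]^{n}$. The identity $\mathcal{S}_{n+1}(\phi)\cap L^{p}(\RR^{n+1})=\{0\}$ is obtained the same way after fixing the coordinates $x_{2},\dots,x_{n}$ (and is literally the key lemma when $n=1$).

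The third identity is the substantive one. Write $D=\{(x_{0},y)\in\RR\times\RR^{n}:x_{0}<|y|^{2}\}$ with $y=(x_{1},\dots,x_{n})$, and for $a\in\RR^{n}$ set $H_{a}=\{(x_{0},y):x_{0}<2\langle a,y\rangle-|a|^{2}\}$. Each $H_{a}$ is an open half-space, and $H_{a}\subseteq D$ since $|y|^{2}-(2\langle a,y\rangle-|a|^{2})=|y-a|^{2}\geq0$. Moreover, for $(x_{0},y)\in D$ the choice $a=y$ gives $x_{0}<|y|^{2}=2\langle y,y\rangle-|y|^{2}$, so $(x_{0},y)\in H_{y}$; because $a\mapsto2\langle a,y\rangle-|a|^{2}$ is continuous one may take $a\in\QQ^{n}$, so that $D=\bigcup_{a\in\QQ^{n}}H_{a}$ is a \emph{countable} union of open half-spaces each contained in $D$. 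Fix such an $a$ and pick an affine isomorphism of $\RR^{n+1}$ carrying $H_{a}$ onto $\RR^{-}\times\RR^{n}$; it turns $F$ into some $G\in\mathcal{S}_{n+1}(\phi)$ with $G|_{\RR^{-}\times\RR^{n}}\in L^{p}$ (as $H_{a}\subseteq D$ and $F|_{D}\in L^{p}$). Tonelli over the last $n-1$ coordinates together with fact (i) shows that for a.e. $y''\in\RR^{n-1}$ the slice $(y_{0},y_{1})\mapsto G(y_{0},y_{1},y'')$ lies in $\mathcal{S}_{2}(\phi)\cap L^{p}(\RR^{-}\times\RR)$, which is $\{0\}$ since $\RR^{-}\times\RR$ is affinely isomorphic to $\RR\times\RR^{+}$. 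Hence $G=0$ a.e. on $\RR^{-}\times\RR^{n}$, i.e. $F=0$ a.e. on $H_{a}$; the countable union over $a\in\QQ^{n}$ then gives $F=0$ a.e. on $D$.

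The hard part, and the reason the third case is not a one-line consequence of the half-space case, is that $D$ strictly contains every half-space it meets: restricting $F$ to a single half-space such as $\{x_{0}<0\}$ only kills $F$ there and leaves a region of infinite measure, e.g. $\{0\leq x_{0}<|y|^{2}\}$, untouched. The way around this is the observation that $D$ is the complement of the closed convex set $\{x_{0}\geq|y|^{2}\}$, which is exactly what allows one to cover \emph{all} of $D$ by countably many half-spaces (the ones lying below the tangent hyperplanes of $\partial D$), each of which is then reducible—via an affine change of variables and one further Tonelli step—to the already-established statement on $\RR\times\RR^{+}$. Everything else (the affine invariance of $\mathcal{S}_{2}(\phi)$ and the Tonelli reductions) is routine, and I do not anticipate difficulties there.
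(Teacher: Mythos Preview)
Your proposal is correct and carries out in detail what the paper only hints at: the text merely prefaces Corollary~\ref{corollarydepth} with ``Because $\RR\times\RR^+$ is a submanifold of many unbounded domains, by the Fubini theorem, our results lead to the inexpressivity of neural networks on many other spaces'' and gives no further argument. Your Tonelli/affine-slice reduction to Propositions~\ref{theorembounded111} and~\ref{theorembounded2} is exactly the intended mechanism for the first two identities, and your observation that slices land in $\mathcal{S}_2(\phi)$ literally (neurons with zero $2$-dimensional weight are constants, and constants are already in $\mathcal{S}_2(\phi)$ via $y=0$) makes the ``constants are tolerated'' remark unnecessary, though harmless.

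Where you add genuine content is the third identity. The paper's one-line justification does not explain how Fubini alone handles the paraboloid exterior $D=\{x_0<|y|^2\}$, since no single coordinate slice of $D$ is an affine half-plane. Your remedy---writing $D$ as the complement of a closed convex set and hence as a countable union $\bigcup_{a\in\QQ^n}H_a$ of open half-spaces lying below the tangent hyperplanes of $\partial D$, then applying an affine change of variables and one Tonelli step on each $H_a$---is the right way to close this gap, and it reduces cleanly to the already-established $\mathcal{S}_2(\phi)\cap L^p(\RR\times\RR^+)=\{0\}$. This is a detail the paper omits but your argument supplies correctly.
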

\subsection{Benefit of Depth} 
The advantage of multiple hidden layer model was discussed in \cite{Pinkus}. There has also been growing interest in investigating the benefit of depth \cite{depth}, \cite{Eldan2016ThePO}. Our Corollary \ref{corollarydepth} and results of \cite{deeprelu}, \cite{width} provide a new perspective on this subject for  ReLU networks.
\begin{example}
Let $n\mathopen \ge\mathclose 2$, and $1\mathopen\leq\mathclose p\mathopen<\mathclose  \infty$. The shallow \textup{ReLU} network expresses no nonzero function in $L^p(\RR^n)$, while the deep \textup{ReLU} network provides universal approximation in $L^p(\RR^n)$.
\end{example}

%Next, we discuss how the requirements of Proposition \ref{prop} could be met in practice. The condition 1 of Proposition \ref{prop} is satisfied if $\lim\limits_{t \to \infty}F_k(t)$ and $\lim\limits_{t \to -\infty}F_k(t)$ exist in $\RR$ for all $k$ and therefore holds for almost every popular bounded activation function. This condition is also true for ReLU activation. Then, we discuss the condition 2 of Proposition \ref{prop}. If $F_k$ are all uniformly continuous then so is $F$. In particular, this condition is true if $F$ is represented by a shallow network with a uniform continuous activation function. Many popular activations are uniformly continuous, such as logistic and ReLU. To treat some not uniformly continuous activations, such as Heaviside, we shall employ the technique of convolution.   

\section{Deep ReLU networks}
Theorem 2.3 of \cite{deeprelu} tells us that deep ReLU networks with at most $\lceil\log_2(n+1)\rceil$ hidden layers can arbitrarily well approximate any function in $L^p(\RR^n)$. In this section, we improve this result by proving that deep ReLU networks with 2 hidden layers is already a universal approximator in $L^p(\RR^n)$. Given a function $F$ defined on $\RR^n$, let $S_F$ consist of all functions $\sum_{i=1}^qt_iF\left((x-z_i)/\sigma\right), q\in \NN, t_i\mathopen\in\mathclose\RR, \sigma\mathopen>\mathclose0, z_i \mathopen\in\mathclose\RR^n.$  We will use
\begin{theorem}[\cite{Park}]\label{thmpark}
Let $F:\RR^n \mathopen\to\mathclose \RR$ be integrable, bounded, and a.e. continuous with $\int_{\RR^n}F \mathopen\neq\mathclose 0$. Then $S_F$ is dense in $L^p(\RR^n)$ for $p\mathopen\in\mathclose [1, \infty).$
\end{theorem}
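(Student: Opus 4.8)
The plan is to recover Theorem \ref{thmpark} from two standard ingredients: rescaled copies of $F$ form an approximate identity, and a convolution against such a kernel is an $L^p$-limit of Riemann sums, each of which already lies in $S_F$. Replacing $F$ by $F/\int_{\RR^n}F$ and absorbing the scalar into the coefficients $t_i$ (which does not change $S_F$), I may assume $\int_{\RR^n}F=1$. For $\sigma>0$ set $F_\sigma(x)=\sigma^{-n}F(x/\sigma)$, so that $\int_{\RR^n}F_\sigma=1$ and $||F_\sigma||_{L^1}=||F||_{L^1}$ for every $\sigma$; since $F$ is bounded and integrable, $F_\sigma\in L^1(\RR^n)\cap L^\infty(\RR^n)\subset L^p(\RR^n)$. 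The goal becomes: every $g\in L^p(\RR^n)$ is approximated in $L^p$ by elements of $S_F$.

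First, because $F_\sigma$ is an $L^1$ kernel of unit integral with $||F_\sigma||_{L^1}$ independent of $\sigma$, the usual approximate-identity argument — expressing $g*F_\sigma-g$ as the integral over $z$ of $F(z)\bigl(g(\cdot-\sigma z)-g\bigr)$ and applying continuity of translations on $L^p$ together with dominated convergence; no positivity of $F$ is needed — gives $||g*F_\sigma-g||_{L^p}\to0$ as $\sigma\to0^+$. Fix $g$ and $\epsilon>0$, choose $\sigma$ with $||g*F_\sigma-g||_{L^p}<\epsilon/2$, and pick $\widetilde g\in C_c(\RR^n)$ with $||g-\widetilde g||_{L^p}$ small enough that $||g*F_\sigma-\widetilde g*F_\sigma||_{L^p}\le||F_\sigma||_{L^1}||g-\widetilde g||_{L^p}<\epsilon/4$ by Young's inequality. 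Now $\widetilde g*F_\sigma(x)=\int_{\RR^n}\widetilde g(y)F_\sigma(x-y)\,dy$, and the integrand is supported, in the $y$-variable, in a fixed cube $Q\supset\supp\widetilde g$; I would approximate this integral by tagged Riemann sums $R_N(x)=\sum_i\widetilde g(y_i)F_\sigma(x-y_i)\,|Q_i|$ over partitions $\{Q_i\}$ of $Q$ whose mesh tends to $0$, with tags $y_i\in Q_i$. Crucially, $R_N\in S_F$ already: it equals $\sum_i t_iF((x-z_i)/\sigma)$ with $z_i=y_i$, $t_i=\sigma^{-n}\widetilde g(y_i)|Q_i|$, and the single common dilation $\sigma$.

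The only substantial step is $R_N\to\widetilde g*F_\sigma$ in $L^p$, and this is where I expect the real work — and where the hypotheses that $F$ is bounded and a.e. continuous are essential. For fixed $x$, the map $y\mapsto\widetilde g(y)F_\sigma(x-y)$ is bounded, compactly supported, and continuous away from a Lebesgue-null set (a reflected, rescaled, translated copy of the discontinuity set of $F$), hence Riemann integrable by the Lebesgue criterion, so $R_N(x)\to\widetilde g*F_\sigma(x)$ for every $x$; moreover $|R_N(x)|\le||\widetilde g||_{L^\infty}||F_\sigma||_{L^\infty}|Q|$. Pointwise convergence with this uniform bound yields $L^p$-convergence on every fixed ball by bounded convergence. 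To pass from balls to all of $\RR^n$ one needs the tails $\int_{|x|>L}|R_N(x)|^p\,dx$ to be small uniformly in $N$; this follows from $F_\sigma\in L^p$ and Minkowski's inequality, using $\sum_i|\widetilde g(y_i)|\,|Q_i|\le||\widetilde g||_{L^\infty}|Q|$ and the fact that, for $y_i\in Q$, $\{|x|>L\}$ is contained in $\{|x-y_i|>L-\mathrm{diam}(Q)\}$. Splitting $||R_N-\widetilde g*F_\sigma||_{L^p}^p$ accordingly and letting first $L$ and then $N$ grow completes the estimate, and the chain $g\approx g*F_\sigma\approx\widetilde g*F_\sigma\approx R_N\in S_F$ shows $S_F$ is dense in $L^p(\RR^n)$. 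The hard part will be precisely this final $L^p$-control of the Riemann sums at infinity, forced on us because $F$ need not have compact support; the pointwise convergence itself is a one-line consequence of the a.e.-continuity hypothesis.
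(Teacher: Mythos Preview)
The paper does not prove Theorem~\ref{thmpark}; it is quoted from \cite{Park} and invoked as a black box in the proof of the final theorem on depth-$3$ ReLU networks. Your argument is correct and is essentially the standard one from the original source: normalize $F$, use the rescalings $F_\sigma$ as an approximate identity, pass to a compactly supported continuous $\widetilde g$, and discretize the convolution $\widetilde g*F_\sigma$ by tagged Riemann sums, each of which already lies in $S_F$. The one non-routine point---$L^p$ convergence of the Riemann sums when $F$ need not have compact support---you handle correctly via bounded convergence on balls combined with a uniform Minkowski tail estimate using $F_\sigma\in L^p$ and the fixed compact support of $\widetilde g$.
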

\noindent and the following simple lemmas
\begin{lemma}
If $F$ and $G$ are functions defined on $\RR^n$ and are both represented by the ReLU network with depth $d_h$, then $F+G$ is also represented by the ReLU network with depth $d_h$.
\end{lemma}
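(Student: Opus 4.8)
The plan is to use the standard \emph{parallel composition} of feedforward networks: place the network computing $F$ and the network computing $G$ side by side inside one network of the \emph{same} depth, with no wires running between the two halves, and let the (affine) output layer add the two results. First I would fix notation for a depth-$d_h$ ReLU network computing a function $H:\RR^n\to\RR$, writing it as an alternating composition
\[
H = A_{d_h+1}\circ \textup{ReLU}\circ A_{d_h}\circ\cdots\circ\textup{ReLU}\circ A_1,
\]
where each $A_j$ is an affine map, $\textup{ReLU}$ acts coordinatewise, the $j$-th hidden layer is the vector $h_j=\textup{ReLU}(A_j h_{j-1})$ with $h_0=x$, and the output is the bare affine image $A_{d_h+1}h_{d_h}$.

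Given affine maps $A_1^F,\dots,A_{d_h+1}^F$ realizing $F$ and $A_1^G,\dots,A_{d_h+1}^G$ realizing $G$, I would build a new network whose $j$-th hidden layer ($1\le j\le d_h$) is the concatenation $(h_j^F,h_j^G)$. For the middle layers $2\le j\le d_h$ the new affine map is block diagonal, $\widetilde A_j=\mathrm{diag}(A_j^F,A_j^G)$, so that the $F$-block and the $G$-block evolve independently; the first affine map is the stacking $\widetilde A_1=\binom{A_1^F}{A_1^G}$ applied to the shared input $x$; and the output map is $\widetilde A_{d_h+1}(u,v)=A_{d_h+1}^F u + A_{d_h+1}^G v$, which is affine because a sum of affine functions is affine, hence a legitimate output layer. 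A one-line induction on $j$ shows the new network's $j$-th hidden layer is exactly $(h_j^F,h_j^G)$, so its output equals $F(x)+G(x)$ for every $x\in\RR^n$. The number of layers is unchanged — only the widths of the hidden layers add up — so $F+G$ is represented by a ReLU network of depth $d_h$, which is what we want.

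I do not expect any real obstacle: the construction works precisely because $\textup{ReLU}$ is applied coordinatewise, so a block-diagonal weight matrix never mixes the two subnetworks, and because the last layer of a ReLU network is a pure affine map, so two output functionals can be combined without spending an extra layer. The only thing to keep straight is the bookkeeping — that $\widetilde A_1$ is a vertical stacking and $\widetilde A_{d_h+1}$ a horizontal one, while the interior maps are block diagonal — together with the (harmless) fact that no width constraint is imposed, so enlarging the hidden layers is allowed.
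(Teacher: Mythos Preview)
Your construction is correct and is exactly the standard parallel-composition argument. The paper itself states this lemma without proof, calling it one of two ``simple lemmas,'' so there is nothing to compare against; your write-up supplies the omitted justification in full.
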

\begin{lemma}
If $F$ is a function defined on $\RR^n$ and is represented by the ReLU network with depth $d_h$, and $\Lambda:\RR^n \to \RR^n$ is an affine map, then $F\circ \Lambda$ is also represented by the ReLU network with depth $d_h$. 
\end{lemma}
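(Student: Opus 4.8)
The plan is to read the claim off directly from the layered structure of a ReLU network, so the proof is essentially a one-line substitution together with the fact that a composition of affine maps is affine. Recall that a function $F$ on $\RR^n$ represented by a ReLU network of depth $d_h$ has the form
\begin{align*}
F = A_{d_h}\circ\sigma\circ A_{d_h-1}\circ\sigma\circ\cdots\circ\sigma\circ A_1,
\end{align*}
where each $A_j$ is an affine map between Euclidean spaces and $\sigma$ denotes the coordinatewise ReLU; the input variable enters only through the first affine map $A_1$, whose domain is $\RR^n$.

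Given an affine map $\Lambda\colon\RR^n\to\RR^n$, I would substitute and use associativity of composition to write
\begin{align*}
F\circ\Lambda = A_{d_h}\circ\sigma\circ\cdots\circ\sigma\circ A_2\circ\sigma\circ(A_1\circ\Lambda).
\end{align*}
Since a composition of two affine maps is again affine, $A_1':=A_1\circ\Lambda$ is an affine map with the same domain $\RR^n$ and the same codomain as $A_1$. Hence $F\circ\Lambda$ is realized by the ReLU network obtained from the original one by replacing the first affine layer $A_1$ with $A_1'$ and leaving every other layer, and in particular the number $d_h$ and all the widths, unchanged. Therefore $F\circ\Lambda$ is represented by a ReLU network of depth $d_h$, which is exactly the assertion. (The companion additivity lemma is handled in the same spirit: stack the two networks as a single hidden block of the same depth $d_h$ and let the common output layer add the two outputs.)

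There is essentially no real obstacle here; the only thing to be careful about is bookkeeping of conventions — that "depth $d_h$" counts layers in the fixed sense used throughout Section~5 (hidden layers together with the output layer), that the output layer carries no activation, and that $\Lambda$ preserves the input dimension $n$ so that $A_1\circ\Lambda$ has the same domain and codomain as $A_1$. With those conventions fixed the argument is immediate, and it is this lemma together with the additivity lemma and Theorem~\ref{thmpark} — applied to a suitable bounded, compactly supported, a.e.\ continuous $F$ with $\int_{\RR^n}F\neq 0$ that is itself expressible by a depth-$3$ ReLU network, e.g.\ $F(x)=\sigma\bigl(1-\sum_{i}(\sigma(x_i)+\sigma(-x_i))\bigr)$ — that will be assembled to prove that depth-$3$ ReLU networks are universal approximators in $L^p(\RR^n)$.
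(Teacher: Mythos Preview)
Your argument is correct and is exactly the natural one: absorb $\Lambda$ into the first affine layer via $A_1':=A_1\circ\Lambda$, leaving the depth and all widths unchanged. The paper itself does not give a proof of this lemma---it is stated as one of the ``simple lemmas'' preceding the depth-$3$ theorem---so there is nothing to compare against beyond noting that your write-up makes explicit what the paper leaves implicit.
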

We now prove
\begin{theorem}
The ReLU network with depth 3(equivalent to 2 hidden layers) is a universal approximator in $L^p(\RR^n)$.
\end{theorem}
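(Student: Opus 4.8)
The plan is to reduce the statement to Theorem~\ref{thmpark}: by that theorem it suffices to exhibit one function that is integrable, bounded, a.e.\ continuous and has nonzero integral and that is \emph{exactly} representable by a depth-$3$ \textup{ReLU} network, after which density of $S_F$ does the rest. I would take
\begin{align*}
F(x) \;=\; \textup{ReLU}\!\left(1-\sum_{i=1}^{n}\bigl(\textup{ReLU}(x_i)+\textup{ReLU}(-x_i)\bigr)\right)\;=\;\max\{0,\ 1-\|x\|_1\},\qquad x\in\RR^n .
\end{align*}
This is the $\ell^1$ pyramid supported on the cross-polytope $\{\|x\|_1\le 1\}$: it is continuous and piecewise affine, bounded by $1$, compactly supported (hence in $L^1(\RR^n)\cap L^\infty(\RR^n)$ and a.e.\ continuous), nonnegative, with $F(0)=1$, so $\int_{\RR^n}F>0$. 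Thus $F$ meets every hypothesis of Theorem~\ref{thmpark}, and $S_F$ is dense in $L^p(\RR^n)$ for $p\in[1,\infty)$.

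Next I would check that $F$ is represented by a depth-$3$ \textup{ReLU} network (two hidden layers): the first hidden layer holds the $2n$ units $x\mapsto\textup{ReLU}(\pm x_i)$, the second hidden layer holds the single unit whose pre-activation $1-\sum_i(\textup{ReLU}(x_i)+\textup{ReLU}(-x_i))$ is affine in the first-layer outputs, and the output is the identity read-off of that unit. To finish, observe that a generic element of $S_F$ is $\sum_{i=1}^{q}t_i\,F\bigl((x-z_i)/\sigma\bigr)$; each summand equals $t_i\,(F\circ\Lambda_i)$ with $\Lambda_i(x)=(x-z_i)/\sigma$ affine, so by the lemma on composition with an affine map (absorbing the scalar $t_i$ into the output weights) each $t_i(F\circ\Lambda_i)$ is represented by a depth-$3$ \textup{ReLU} network, and by the lemma on sums so is the whole finite sum. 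Hence $S_F$ sits inside the representation space of depth-$3$ \textup{ReLU} networks, which is therefore dense in $L^p(\RR^n)$; that is exactly the assertion.

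I do not expect a serious obstacle once Theorem~\ref{thmpark} is in hand: the only genuine idea is the choice of $F$. The relevant point is that $\max\{0,1-\|x\|_1\}$ is a \emph{genuinely} depth-$3$ bump — for $n\ge2$ it cannot be produced with a single hidden layer, since (by the earlier sections) a shallow \textup{ReLU} network expresses no nonzero function in $L^p(\RR^n)$ — yet $\|x\|_1$ is already a finite sum of ridge \textup{ReLU}s, so one more \textup{ReLU} turns it into a compactly supported bump with nonzero mass. Everything else is bookkeeping: matching the ``depth $3$ $=$ two hidden layers'' convention and invoking the two preceding lemmas to see that affine reparametrizations, scalar multiples and finite sums do not raise the depth.
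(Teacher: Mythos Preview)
Your proof is correct and follows exactly the same strategy as the paper: exhibit a compactly supported bump representable by a depth-$3$ \textup{ReLU} network, invoke Theorem~\ref{thmpark}, and use the two lemmas on sums and affine reparametrizations to conclude that $S_F$ lies in the depth-$3$ representation space. The only difference is the choice of bump: the paper takes $F(x)=G\bigl(G(x_1)+\cdots+G(x_n)-(n-1)\bigr)$ with $G(t)=\rho(t)-2\rho(t-1)+\rho(t-2)$ the triangular hat, whereas your $\ell^1$-pyramid $\max\{0,1-\|x\|_1\}$ is a slightly leaner alternative (first hidden layer of width $2n$ and second of width $1$, versus $3n$ and $3$).
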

\begin{proof}
Let $\rho=ReLU$, and let $G$ be a non-negative-valued function on $\RR$ defined by, for $x\in\RR$,
$$G(x) =\rho(x)-2\rho(x-1)+\rho(x-2).$$
Then $G$ satisfies $G(x)=0$ for $x\notin [0,2]$ and $0\leq 0 \leq 1$ for $x\in [0,2]$. We construct a function $F$ on $\RR^n$ defined by, for $x=(x_1, \ldots, x_n)$,
\begin{align*}
F(x) = G(G(x_1)+\cdots+G(x_n)-(n-1)).
\end{align*}
Then for $x\in \RR^n\setminus [0,2]^n$, we have $G(x_1)+\cdots+G(x_n)-(n-1)\leq n-1-(n-1)=0$ and therefore $F(x)=0$. This function $F$ satisfies:  $F$ is continuous and non-negative; $F$ is of bounded support as  for $x\in \RR^n\setminus [0,2]^n$, $F(x)=0$; $F$ is not a vanishing function as $F((1,\ldots,1))=1$. By using Theorem \ref{thmpark}, $S_F$ is dense in $L^p(\RR^n)$. Moreover, $F$ is represented by ReLU network with depth 3, and therefore by the above simple lemmas, all functions in $S_F$ are  represented by ReLU network with depth 3. Consequently, ReLU network with depth 3 is a universal approximator in $L^p(\RR^n)$. The general case $d_h \ge 3$ follows from similar arguments.
\end{proof}

\bibliographystyle{plain}
\bibliography{acnn}

\end{document}